\theoremstyle{plain}
\newtheorem{theorem}{Theorem}[section]
\newtheorem{corollary}[theorem]{Corollary}
\newtheorem{proposition}[theorem]{Proposition}
\newtheorem{definition}[theorem]{Definition}
\newtheorem{assumption}[theorem]{Assumption}
\theoremstyle{definition}
\newtheorem{example}[theorem]{Example}
\newtheorem{remark}[theorem]{Remark}
\newcommand{\rmd}{\mathrm{d}}
\begin{document}

\title{Behavior of linear $L^2$-boosting algorithms in the vanishing learning rate asymptotic}
\author{Clément Dombry$^*$ and Youssef Esstafa$^{*}$\footnote{Universit\'e Bourgogne Franche-Comt{\'e}, Laboratoire de Math{\'e}matiques de Besan\c{c}on UMR6623, CNRS, F-25000 Besan{\c c}on, France. Email: \href{mailto:clement.dombry@univ-fcomte.fr}{clement.dombry@univ-fcomte.fr}
\hspace*{0.35cm}$^{**}$\hspace*{0.04cm}ENSAI, Campus de Ker-Lann, 51 Rue Blaise Pascal, BP 37203 - 35172 Bruz Cedex, France. Email: \href{mailto:youssef.esstafa@ensai.fr}{youssef.esstafa@ensai.fr}}}
\maketitle
\date{}

\begin{abstract}
We investigate the asymptotic behaviour of gradient  boosting algorithms when the learning rate converges to zero and the number of iterations is rescaled accordingly. We mostly consider $L^2$-boosting for regression with linear base learner as studied in  \cite{BY03} and analyze also a stochastic version of the model where subsampling is used at each step \citep{F02}. We prove a deterministic limit in the vanishing learning rate asymptotic and characterize the limit as the unique solution of a linear differential equation in an infinite dimensional function space. Besides, the training and test error of the limiting procedure are thoroughly analyzed. We finally illustrate and discuss our result on a simple numerical experiment where the linear $L^2$-boosting operator is interpreted as a smoothed projection and time is related to its number of degrees of freedom.
\end{abstract}

\noindent
\textbf{Keywords:} boosting, non parametric regression, statistical learning, stochastic algorithm, Markov chain, convergence of stochastic process. \\
\textbf{Mathematics subject classification:} 62G08, 60J20.

\pagebreak

\tableofcontents

\section{Introduction}
In the past decades, boosting has become a major and powerful prediction method in machine learning. The success of the classification algorithm AdaBoost  by \cite{FS99} demonstrated the possibility to combine many weak learners in a sequential way in order to produce better predictions, with widespread  applications in gene expression \citep{Dudoit2002} or music genre identification \citep{Bergstra2006}, to name only a few. \cite{FHT00} were able to see a wider statistical framework that lead to the gradient boosting  \citep{F01}, where a weak learner (e.g., regression trees) is used to optimize a loss function in a sequential procedure akin to gradient descent. Choosing the loss function according to the statistical problem at hand results in a versatile and efficient tool that can handle classification, regression, quantile regression or survival analysis... The popularity of gradient boosting is also due to its efficient implementation in the R package \texttt{gbm} by \cite{R07}. 

Along the methodological developments, strong theoretical results have justified the good performance of boosting. Consistency of boosting algorithm, i.e. their ability to achieve the optimal Bayes error rate for large samples, is considered in \cite{B04}, \cite{ZY05} or \cite{BT07}. The present paper is strongly influenced by \cite{BY03} that proposes an analysis of regression boosting algorithms built on linear base learners thanks to explicit formulas for the boosted predictor and its error rate. 

In this paper, we focus  on gradient boosting for regression with square loss and we briefly describe the corresponding algorithm. Consider a regression model 
\begin{equation}\label{eq:regression-model}
Y=f(X)+\varepsilon
\end{equation}
where the response $Y$ is real-valued, the predictor $X$ takes values in $[0,1]^p$, the regression function $f:[0,1]^p\to \mathbb{R}$ is measurable and the error $\varepsilon$ is centered, square integrable and independent of $X$. Based on  a sample  $(Y_i,X_i)_{1\leq i\leq n}$ of independent observations of the regression model \eqref{eq:regression-model}, we aim at estimating the regression function $f$. Given a weak learner $L(x)=L(x;(Y_i,X_i)_{1\leq i\leq n})$, the boosting algorithm with learning rate $\lambda\in(0,1)$ produces a sequence of models $\hat F_m^\lambda(x)$, $m\geq 0$, by recursively fitting the weak learner to the current residuals and updating the model with a shrunken version of the fitted model. More formally, we define
\begin{equation}\label{eq:boosting-recursion}
\left\{\begin{array}{ll}
\hat F_0^\lambda(x)&= \bar Y_n, \\
\hat F_{m+1}^\lambda(x)&=\hat F_m^\lambda(x)+\lambda L(x;(R_{m,i}^\lambda,X_i)_{1\leq i\leq n}),\quad m\geq 0,
\end{array}\right.
\end{equation}
where $\bar Y_n$ denotes the empirical mean of $(Y_i)_{1\leq i\leq n}$ and $(R_{m,i}^\lambda)_{1\leq i\leq n}$ the residuals 
\[
R_{m,i}^\lambda=Y_i-\hat F_m^\lambda(X_i),\quad 1\leq i\leq n.
\]

In practice, the shrinkage parameter $\lambda$ and the number of iterations $m$ are the main parameters and must be chosen suitably to achieve good performance. Common practice is to fix $\lambda$ to a small value, typically $\lambda=0.01$ or $0.001$, and then to select $m$ by cross-validation. Citing \cite{R07}, with slight modifications to match our notations:
\begin{displayquote}
\textit{"The issues that most new users of gbm struggle with are the choice of tree numbers $m$ and shrinkage $\lambda$. It is important to know that smaller values of $\lambda$ (almost) always give improved predictive performance. That is, setting $\lambda=0.001$ will almost certainly result in a model with better out-of-sample predictive performance than setting $\lambda=0.01$. However, there are computational costs, both storage and CPU time, associated with setting shrinkage to be low. The model with $\lambda=0.001$ will likely require ten times as many iterations as the model with $\lambda=0.01$, increasing storage and computation time by a factor of $10$."}
\end{displayquote}

This citation clearly emphasizes the role of small learning rates in boosting. The purpose of the present paper is to prove the existence of a vanishing learning rate limit ($\lambda\to 0$) for the boosting algorithm when the number of iterations is rescaled accordingly. To our best knowledge, this is the first result in this direction. More precisely, in the case when the base learner is linear, we prove the existence of the limit
\begin{equation}\label{eq:limit}
\hat F_t(x)=\lim_{\lambda\downarrow 0} \hat F_{[t/\lambda]}^\lambda(x),\quad t\geq 0.
\end{equation}
We furthermore characterize the limit as the solution of a linear differential equation in infinite dimensional space and also analyse the corresponding training and test errors. The case of stochastic gradient boosting \citep{F02}, where subsampling is introduced at each iteration, is also analysed: we prove the existence of a deterministic  vanishing learning rate limit that corresponds to a modified deterministic base learner defined in a natural way. The analysis of this stochastic framework requires involved tools of Markov chain theory and the characterization of their convergence through generators \citep{EK86, SV06}. A limitation of our work is the strong assumption of linearity of the base learner:  the ubiquitous regression tree does not satisfy this assumption and further work is needed to deal with this important case. Our results are of probabilistic nature: we focus on the existence and properties of the limit \eqref{eq:limit} for fixed sample size $n\geq 1$, while statistical issues such as consistency as $n\to\infty$ is left aside for further research. 

The paper is structured as follows. In Section~\ref{sec:L2boosting-BY}, we prove the existence of the vanishing learning rate limit \eqref{eq:limit} for the boosting procedure with linear base learner (Proposition~\ref{prop:vanishing-learning-rate}), we characterize the limit as the solution of a linear differential equation in a function space (Theorem~\ref{thm:EDO}) and we analyze the training and test errors (Propositions~\ref{prop:train-error} and~\ref{prop:test-error-2}). The stochastic gradient boosting where subsampling is introduced at each step is considered in Section~\ref{sec:SGB}. We prove that the vanishing learning rate limit still exists and that the convergence holds in quadratic mean (Corollary~\ref{cor:vanishing-limit-stochastic}) and also in the sense of functional weak convergence in Skorokhod space (Theorem~\ref{thm:Markov-functional}).   A simple numerical experiment is presented in Section~\ref{sec:illustration} in order to illustrate our theoretical findings, leading us to the interpretation of linear $L^2$-boosting as a smoothed projection where time is related to the degrees of freedom of the linear boosting operator. All the technical proofs are gathered in Section~\ref{sec:proofs}.

\section{$L^2$-boosting with linear base learner}\label{sec:L2boosting-BY}
\subsection{Framework}
We consider the framework of boosting for regression with $L^2$-loss and linear base learner provided by \cite{BY03}. This framework allows for explicit computations relying on linear algebra. The regression design is assumed deterministic, or equivalently, we formulate our results conditionally on the predictor values  $X_i=x_i$, $i=1,\ldots,n$.  The space of measurable and bounded functions on $[0,1]^p$ is denoted by $L^\infty=L^\infty([0,1]^p,\mathbb{R})$. Our main hypothesis is the following linearity assumption of the base learner $L$.

\begin{assumption}\label{ass:BY}
We assume that the base learner of the boosting algorithm~\eqref{eq:boosting-recursion} satisfies
\begin{equation}\label{eq:linear-assumption}
L(x;(x_i,Y_i)_{1\leq i\leq n})= \sum_{j=1}^n Y_jg_j(x),\quad x\in[0,1]^p,
\end{equation}
where $g_1,\ldots,g_n\in L^\infty$ may depend on $(x_i)_{1\leq i\leq n}$.
\end{assumption}
It follows from Assumption~\ref{ass:BY} that $g_j$ is the output of the base learner for input $(Y_i)_{1\leq i\leq n}=(\delta_{ij})_{1\leq i\leq n}$, where the Kroenecker symbol $\delta_{ij}$ is equal to $1$ if $i=j$ and $0$ otherwise. 

Under Assumption~\ref{ass:BY}, the boosting algorithm with input $(Y_i,x_i)_{1\leq i\leq n}$ and learning rate $\lambda\in (0,1)$ outputs a sequence of bounded functions $(\hat F_m^\lambda)_{m\geq 1}$. The sequence remains in the finite dimensional linear space spanned in  $L^\infty$ by  the functions $g_1,\ldots,g_n$ and the constant functions (due to the initialization equal to the constant function $\bar Y_n$). A straightforward recursion based on Equation \eqref{eq:boosting-recursion} yields
\begin{equation}\label{eq:linear-formula}
\hat F_m^\lambda(x)=\bar Y_n+\sum_{i=1}^n w_{m,i}^\lambda g_i(x)
\end{equation}
where the weights $w_{m}^\lambda=(w_{m,i}^\lambda)_{1\leq i\leq n}$ satisfy
\[
\left\{\begin{array}{ll}
w_{0,i}^\lambda&\equiv 0  \\
w_{m+1,i}^\lambda &= w_{m,i}^\lambda +\lambda(Y_i-\bar Y_n)-\lambda\sum_{j=1}^nw_{m,j}^\lambda g_j(x_i)
\end{array}\right..
\]
This linear recursion system can be rewritten in vector form as
\begin{equation}\label{weights-vec}
\left\{\begin{array}{ll}
w_{0}^\lambda&\equiv 0 \\
w_{m+1}^\lambda &= (I-\lambda S)w_{m}^\lambda+\lambda \tilde Y
\end{array}\right.,
\end{equation}
with  $S=(g_j(x_i))_{1\leq i,j\leq n}$, $\tilde Y=(Y_i-\bar Y_n)_{1\leq i\leq n}$  the  centered observations and $I$ the $n\times n$ identity matrix. This linear recursion is easily solved, yielding the following proposition. 

\begin{proposition}\label{prop:weight-formula}
Under Assumption~\ref{ass:BY}, the boosting algorithm output $\hat F_m^\lambda$ is given by Equation~\eqref{eq:linear-formula} with weights 
\begin{equation}\label{eq:weight}
w_{m}^\lambda=\lambda\sum_{j=0}^{m-1}(I-\lambda S)^j\tilde Y,\quad m\geq 0.
\end{equation}
If the matrix $S$ is invertible, then 
\[
w_m^\lambda=S^{-1}\left[ I-(I-\lambda S)^m\right] \tilde Y,\quad m\geq 0.
\]
\end{proposition}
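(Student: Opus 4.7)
The plan is to verify both formulas directly from the linear recursion~\eqref{weights-vec}, which has already been derived from~\eqref{eq:boosting-recursion} earlier in the text. The first formula is obtained by induction on $m$, and the second is a geometric-series simplification of the first under the invertibility hypothesis.

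For the first formula, I would proceed by induction on $m\geq 0$. The base case $m=0$ holds because the empty sum equals $0$, matching the initialization $w_0^\lambda\equiv 0$. For the inductive step, assume $w_m^\lambda=\lambda\sum_{j=0}^{m-1}(I-\lambda S)^j\tilde Y$. Substituting into~\eqref{weights-vec} and reindexing gives
\[
w_{m+1}^\lambda = (I-\lambda S)\,\lambda\sum_{j=0}^{m-1}(I-\lambda S)^j\tilde Y + \lambda\tilde Y = \lambda\sum_{j=1}^{m}(I-\lambda S)^j\tilde Y + \lambda\tilde Y = \lambda\sum_{j=0}^{m}(I-\lambda S)^j\tilde Y,
\]
which closes the induction.

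For the second formula, I would invoke the telescoping identity $(I-A)\sum_{j=0}^{m-1}A^j = I-A^m$ with $A=I-\lambda S$. When $S$ is invertible, so is $\lambda S = I-A$, hence
\[
\sum_{j=0}^{m-1}(I-\lambda S)^j = (\lambda S)^{-1}\bigl[I-(I-\lambda S)^m\bigr].
\]
Multiplying by $\lambda$ and applying the resulting operator to $\tilde Y$ yields the announced closed form $w_m^\lambda = S^{-1}[I-(I-\lambda S)^m]\tilde Y$.

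There is no real obstacle here: the proposition is a routine explicit resolution of a first-order inhomogeneous linear recursion in $\mathbb{R}^n$, and the only mild point to mention is that the second identity requires invertibility of $\lambda S$ (equivalently of $S$, since $\lambda\neq 0$) so that the geometric sum can be written in closed form. The eigenvalues of $I-\lambda S$ play no role at this stage; convergence questions as $m\to\infty$ or $\lambda\to 0$ are deferred to the subsequent results of the section.
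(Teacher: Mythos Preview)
Your proof is correct and follows essentially the same approach as the paper: an induction on $m$ from the recursion~\eqref{weights-vec} for the first formula, and the telescoping identity $\lambda S\sum_{j=0}^{m-1}(I-\lambda S)^j=I-(I-\lambda S)^m$ together with invertibility of $S$ for the second.
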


Note that this result is similar to Proposition~1 in \citet{BY03}, but they consider only the values on the observed sample $(x_i)_{1\leq i\leq n}$ while we provide the extrapolation to $x\in[0,1]^p$ more explicitly. Also we consider a different initialization to the empirical mean  instead of initialization to zero, which seems more
relevant in practice.

\begin{example}
A simple example satisfying Assumption~\ref{ass:BY} is the Nadaraya-Watson estimator (see \cite{Nadaraya1964} and \cite{Watson1964})
\[
L(x)=\frac{\sum_{i=1}^n K_h(x-x_i)Y_i}{\sum_{i=1}^n K_h(x-x_i)}, \quad x\in[0,1]^p,
\]
where $h>0$ is the bandwidth,  $K:\mathbb{R}^p\to (0,+\infty)$ is the kernel, i.e. a density function, and $K_h(z)=h^{-d}K(z/h)$ the rescaled kernel.
\end{example}
\begin{example}\label{ex:smoothing-spline}
A more involved example of base learner, discussed in \cite{BY03} Section~3.2, is the smoothing spline in dimension $p=1$. For $r\geq 1$ and $\nu>0$, the smoothing spline $L$ is the unique minimizer over  $\mathcal{W}_2^{(r)}$  of the penalized criterion
\[
\sum_{i=1}^n (Y_i-L(x_i))^2+\nu \int_{0}^1 (L^{(r)}(x))^2 \rmd x,
\]
where $\mathcal{W}_2^{(r)}$ denotes the Sobolev space of functions that are continuously differentiable of order $r-1$  with square integrable weak derivative of order $r$. Assuming $0<x_1<\cdots<x_n<1$, the solution is known to be piecewise polynomial function of degree $r+1$ with constant derivative of order $r+1$ on $n+1$ intervals $(0,x_1),\ldots,(x_n,1)$.
It is used in \cite{BY03} that the matrix $S$ is symmetric definite positive with positive eigenvalues $1=\mu_1=\ldots=\mu_r>\ldots >\mu_n>0$, see \cite{wahba1990}.
\end{example}

\subsection{The vanishing learning rate asymptotic}
We next consider the existence of a limit in the vanishing learning rate asymptotic $\lambda\to 0$.  The explicit simple formulas from Proposition~\ref{prop:weight-formula} allows for a simple analysis. We recall that the exponential of a square matrix $M$ is defined by
\[
\exp(M)=\sum_{k\geq 0} \frac{1}{k!}M^k.
\]

\begin{proposition}\label{prop:vanishing-learning-rate}
Under Assumption~\ref{ass:BY}, as $\lambda\to 0$, we have 
\begin{equation}\label{eq:lambda-to-0}
\hat F_{[t/\lambda]}^\lambda(x)\longrightarrow \hat F_{t}(x), \quad t\geq 0,\ x\in[0,1]^p,
\end{equation}
uniformly on compact sets $[0,T]\times [0,1]^p$, $T>0$, where the limit satisfies
\begin{equation}\label{eq:Ft}
\hat F_t(x)=\bar Y_n+\sum_{i=1}^n w_{t,i} g_i(x)
\end{equation}
with weights $w_t=(w_{t,i})_{1\leq i\leq n}$ given by
\begin{equation}\label{eq:wt}
w_t=-\sum_{j\geq 1}\frac{(-t)^j}{j!}S^{j-1} \tilde Y,\quad t\geq 0.
\end{equation}
If the matrix $S$ is invertible, then 
\begin{equation}\label{eq:wt-inv}
w_t=S^{-1}\left( I-e^{-tS}\right) \tilde Y,\quad t\geq 0.
\end{equation}
\end{proposition}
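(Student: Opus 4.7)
The plan is to start from the explicit formula~\eqref{eq:weight}, $w_m^\lambda = \lambda \sum_{j=0}^{m-1}(I - \lambda S)^j \tilde Y$, and to perform a term-by-term expansion. Specifically, I expand $(I - \lambda S)^j$ via the binomial theorem, interchange the order of summation in $j$ and $k$, and apply the hockey-stick identity $\sum_{j=k}^{m-1}\binom{j}{k} = \binom{m}{k+1}$. After re-indexing $\ell = k+1$, this yields the polynomial representation
\[
w_m^\lambda = -\sum_{\ell=1}^{m}(-\lambda)^\ell \binom{m}{\ell} S^{\ell-1}\tilde Y,
\]
whose structure already mirrors the target series~\eqref{eq:wt}. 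Setting $m = [t/\lambda]$, each coefficient satisfies
\[
\lambda^\ell \binom{[t/\lambda]}{\ell} = \frac{1}{\ell!}\prod_{k=0}^{\ell-1}\bigl(\lambda[t/\lambda] - k\lambda\bigr) \longrightarrow \frac{t^\ell}{\ell!}
\]
as $\lambda \downarrow 0$, uniformly in $t \in [0,T]$, since $\lambda[t/\lambda] \to t$ uniformly on $[0,T]$.

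Next I upgrade the pointwise convergence to convergence of the whole sum. The key uniform envelope is
\[
\lambda^\ell \binom{[t/\lambda]}{\ell} \leq \frac{(\lambda[t/\lambda])^\ell}{\ell!} \leq \frac{T^\ell}{\ell!},\quad t \in [0,T],
\]
so that, in any matrix norm, each summand of $w_{[t/\lambda]}^\lambda$ is dominated by $\frac{T^\ell \|S\|^{\ell-1}}{\ell!}\|\tilde Y\|$, which is summable. A standard truncation argument---splitting into $\ell \leq N$ (finite sum, uniform convergence of each coefficient) and $\ell > N$ (tail controlled uniformly by the envelope)---then promotes the convergence to uniform convergence on $[0,T]$. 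Passing from uniform convergence of the weights to uniform convergence of $\hat F^\lambda_{[t/\lambda]}$ on $[0,T]\times[0,1]^p$ is immediate from \eqref{eq:linear-formula} together with $g_i \in L^\infty$, via the bound $|\hat F_{[t/\lambda]}^\lambda(x) - \hat F_t(x)| \leq \sum_i |w_{[t/\lambda],i}^\lambda - w_{t,i}|\,\|g_i\|_\infty$.

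Finally, the invertible case formula~\eqref{eq:wt-inv} follows by direct manipulation: the identity $S^{-1}(I - e^{-tS}) = -\sum_{k\geq 1}\frac{(-t)^k}{k!} S^{k-1}$ matches~\eqref{eq:wt} term by term. Alternatively, one may start from the invertible-case identity in Proposition~\ref{prop:weight-formula} and observe that $(I - \lambda S)^{[t/\lambda]} \to e^{-tS}$ via the matrix logarithm expansion $\log(I - \lambda S) = -\lambda S + O(\lambda^2)$ combined with $\lambda[t/\lambda]\to t$. The main technical point is the dominated-convergence-style interchange of limit and series in the general (possibly singular) case, which the uniform envelope above handles cleanly; the rest is bookkeeping around the binomial expansion and the floor function.
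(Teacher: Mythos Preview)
Your proof is correct and follows essentially the same route as the paper: both derive the polynomial representation $w_m^\lambda = -\sum_{\ell=1}^m(-\lambda)^\ell\binom{m}{\ell}S^{\ell-1}\tilde Y$ via the binomial theorem and the hockey-stick identity, then pass to the limit series using coefficient-wise convergence together with a summable envelope. The only cosmetic difference is that the paper truncates the sum at $[t/\lambda]$ and bounds the two pieces explicitly (obtaining an $O(\lambda)$ rate for the main part), whereas you truncate at a fixed $N$ in a dominated-convergence style; both arguments are equivalent in substance.
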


The formulas are even more explicit in the case when $S$ is a  symmetric matrix because it can then  be diagonalized in an orthonormal basis of eigenvectors.
\begin{corollary}\label{cor:symmetric}
Suppose Assumption~\ref{ass:BY} is satisfied and  $S=(g_j(x_i))_{1\leq i,j\leq n}$ is a symmetric matrix. Denote by $(\mu_j)_{1\leq j\leq n}$ the eigenvalues of $S$ and by $(u_j)_{1\leq j\leq n}$ the corresponding eigenvectors. Then the vanishing learning rate asymptotic yields the
weights
\[
w_t=\sum_{j=1}^n \frac{1-e^{-\mu_j t}}{\mu_j}u_j u_j^{\mathrm{T}}\tilde Y
\]
and the limit
\begin{equation}\label{F-exp-limit}
\hat F_t(x)=\bar Y_n+\sum_{1\leq i,j\leq n} \frac{1-e^{-\mu_j t}}{\mu_j}\left(v_i^\mathrm{T}u_j u_j^\mathrm{T} \tilde Y\right) g_i(x),
\end{equation}
with $(v_i)_{1\leq i\leq n}$ the canonical basis in $\mathbb{R}^n$. When $\mu=0$, we use extension by continuity, that is the convention $(1-e^{-\mu t})/\mu=t$.
\end{corollary}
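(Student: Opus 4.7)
The plan is to combine the general series formula \eqref{eq:wt} from Proposition~\ref{prop:vanishing-learning-rate} with the spectral decomposition of the symmetric matrix $S$. Since $S$ is symmetric, the spectral theorem gives an orthonormal basis of eigenvectors $(u_j)_{1\leq j\leq n}$ with associated real eigenvalues $(\mu_j)_{1\leq j\leq n}$, so that
\[
S=\sum_{j=1}^n \mu_j u_j u_j^{\mathrm T}\quad\text{and}\quad S^{k}=\sum_{j=1}^n \mu_j^{k} u_j u_j^{\mathrm T}\ \text{for all } k\geq 0,
\]
the last identity following from the orthonormality relations $u_j^{\mathrm T}u_k=\delta_{jk}$.

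Next I would plug this spectral expansion of $S^{j-1}$ into \eqref{eq:wt} and exchange the two (absolutely convergent) sums. For each eigenvalue $\mu_k$, the coefficient in front of $u_k u_k^{\mathrm T}\tilde Y$ becomes the scalar series
\[
-\sum_{j\geq 1}\frac{(-t)^j}{j!}\mu_k^{j-1},
\]
which, when $\mu_k\neq 0$, can be factored into $-\mu_k^{-1}\sum_{j\geq 1}(-t\mu_k)^j/j!=(1-e^{-\mu_k t})/\mu_k$. When $\mu_k=0$ only the term $j=1$ survives and yields $t$, consistent with the convention $(1-e^{-\mu t})/\mu=t$ at $\mu=0$. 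This produces the announced expression for $w_t$. (As a sanity check: if $S$ is invertible, the same formula is obtained directly by applying the spectral calculus to \eqref{eq:wt-inv}, since $S^{-1}(I-e^{-tS})=\sum_j \mu_j^{-1}(1-e^{-\mu_j t})u_ju_j^{\mathrm T}$.)

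Finally, to obtain \eqref{F-exp-limit}, I would write the $i$-th coordinate of $w_t$ in the canonical basis as $w_{t,i}=v_i^{\mathrm T}w_t=\sum_{j=1}^n \frac{1-e^{-\mu_j t}}{\mu_j}v_i^{\mathrm T}u_j u_j^{\mathrm T}\tilde Y$, and substitute this into the limit formula \eqref{eq:Ft} from Proposition~\ref{prop:vanishing-learning-rate}. There is no substantive obstacle here; the whole argument is a straightforward spectral calculus manipulation, the only mild subtlety being to verify that the series/sum exchange is legitimate (immediate from absolute convergence) and to treat the possibly singular case $\mu_k=0$ via the power series rather than via \eqref{eq:wt-inv}.
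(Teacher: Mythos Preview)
Your proposal is correct and follows essentially the same approach as the paper: spectral decomposition of $S$, substitution into the series \eqref{eq:wt}, exchange of the absolutely (normally) convergent sums to reduce to the scalar series $-\sum_{j\geq 1}\frac{(-t)^j}{j!}\mu_k^{j-1}$, treatment of the case $\mu_k=0$ by continuity, and finally extraction of $w_{t,i}=v_i^{\mathrm T}w_t$ to plug into \eqref{eq:Ft}. The paper's proof is identical in structure and detail.
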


Interestingly, the limit function $(\hat F_t)_{t\geq 0}$ appearing in the vanishing learning rate asymptotic can be characterized as the solution of a linear differential equation in  infinite dimensional space. The intuition is quite clear from the following heuristic: the boosting dynamic 
\[
\hat F_{m+1}^\lambda= \hat F_m^\lambda+\lambda \sum_{i=1}^n (Y_i-\hat F_{m}^\lambda(x_i))g_i
\]
implies, for  $t=\lambda m$,  
\[
\lambda^{-1}\left(\hat F_{[(t+\lambda)/\lambda]}^\lambda -\hat F_{[t/\lambda]}^\lambda\right)= \sum_{i=1}^n (Y_i-\hat F_{[t/\lambda]}^\lambda(x_i))g_i.
\]
Letting  $\lambda\to 0$, the convergence $\hat F_{[t/\lambda]}^\lambda\to \hat F_t$  suggests 
\[
 \hat F_t'=\sum_{i=1}^n (Y_i-\hat F_{t}(x_i))g_i.
\]
We make this heuristic rigorous in the following proposition. For $t\geq 0$, we consider  $\hat F_t$ as an element of the Banach space $L^\infty=L^\infty([0,1]^p,\mathbb{R})$ and prove that $(\hat F_t)_{t\geq 0}$ is the unique solution of a linear differential equation. More precisely, it is easily seen that the linear operator  $\mathcal{L}:L^\infty\to L^\infty$
defined by 
\[
\mathcal{L}(Z)=\sum_{i=1}^n Z(x_i)g_i,\quad Z\in L^\infty,
\]
is bounded and we consider the differential equation in the Banach space $L^\infty$ 
\begin{equation}\label{eq:EDO}
Z'(t)=-\mathcal{L}(Z(t))+G,\quad t\geq 0,
\end{equation}
with $G=\sum_{i=1}^n Y_i  g_i$.

\begin{theorem}\label{thm:EDO}
\begin{enumerate}
\item[i)]  For all $Z_0\in L^\infty$, the differential equation \eqref{eq:EDO} has a unique solution satisfying $Z(0)=Z_0$. Furthermore, if there exists  $\mathcal{Y}\in L^\infty$ such that $\mathcal{L}(\mathcal{Y})=G$, this solution is explicitly given  by
\begin{equation}\label{eq:EDO-explicit}
Z(t)=(e^{-t\mathcal{L}})Z_0+(\mathrm{Id}-e^{-t\mathcal{L}})\mathcal{Y},\quad t\geq 0.
\end{equation}
\item[ii)] The function $(\hat F_t)_{t\geq 0}$  is the solution of~\eqref{eq:EDO} with initial condition $\bar Y_n$. Assuming there exists $\mathcal{Y}\in L^\infty$ such that $\mathcal{L}(\mathcal{Y})=G$, we thus have
\[
\hat F_t=(e^{-t\mathcal{L}})\bar Y_n+(\mathrm{Id}-e^{-t\mathcal{L}})\mathcal{Y},\quad t\geq 0.
\]
\end{enumerate}
\end{theorem}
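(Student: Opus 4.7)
My plan is to first establish existence and uniqueness of solutions to \eqref{eq:EDO} in part (i) by standard techniques for linear ODEs with a bounded generator, and then in part (ii) to verify by direct calculation that $(\hat F_t)_{t\geq 0}$, as given by Proposition~\ref{prop:vanishing-learning-rate}, solves the equation with initial condition $\bar Y_n$, concluding by uniqueness.

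For part (i), the key observation is that $\mathcal{L}$ is bounded on $L^\infty$ (indeed $\|\mathcal{L}\| \leq \sum_{i=1}^n \|g_i\|_\infty$), so the series $e^{-t\mathcal{L}} = \sum_{k\geq 0}(-t)^k \mathcal{L}^k/k!$ converges in operator norm and defines a uniformly continuous semigroup with $\tfrac{d}{dt}e^{-t\mathcal{L}} = -\mathcal{L}\, e^{-t\mathcal{L}} = -e^{-t\mathcal{L}}\mathcal{L}$. Existence would follow from the variation-of-constants candidate
\[
Z(t) = e^{-t\mathcal{L}} Z_0 + \int_0^t e^{-(t-s)\mathcal{L}} G\, ds,
\]
interpreted as a Bochner integral in $L^\infty$; differentiating (using the fundamental theorem of calculus for the endpoint and bringing $-\mathcal{L}$ under the integral) yields $Z'(t) = -\mathcal{L}(Z(t)) + G$ with $Z(0) = Z_0$. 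For uniqueness, the difference $W$ of two solutions satisfies $W(0) = 0$ and $\|W(t)\|_\infty \leq \|\mathcal{L}\|\int_0^t \|W(s)\|_\infty\, ds$, whence $W\equiv 0$ by Grönwall's lemma. When some $\mathcal{Y} \in L^\infty$ satisfies $\mathcal{L}(\mathcal{Y}) = G$, I would check the ansatz $Z(t) = e^{-t\mathcal{L}} Z_0 + (\mathrm{Id} - e^{-t\mathcal{L}})\mathcal{Y}$ directly: its derivative equals $-\mathcal{L}\, e^{-t\mathcal{L}}(Z_0 - \mathcal{Y}) = -\mathcal{L}(Z(t) - \mathcal{Y}) = -\mathcal{L}(Z(t)) + G$, so uniqueness identifies it with the solution, yielding \eqref{eq:EDO-explicit}.

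For part (ii), the initial condition $\hat F_0 = \bar Y_n$ is immediate from $w_0 = 0$ in formula \eqref{eq:wt}. Differentiating the power series \eqref{eq:wt} term by term gives $\tfrac{d w_t}{dt} = e^{-tS}\tilde Y$, while the same expansion, shifted by one index, yields the vector identity $e^{-tS}\tilde Y = \tilde Y - S w_t$. Since $(S w_t)_i = \sum_j g_j(x_i) w_{t,j} = \hat F_t(x_i) - \bar Y_n$ and $\tilde Y_i = Y_i - \bar Y_n$, combining these gives $\tfrac{d w_{t,i}}{dt} = Y_i - \hat F_t(x_i)$, so
\[
\hat F_t'(x) = \sum_{i=1}^n \frac{d w_{t,i}}{dt}\, g_i(x) = \sum_{i=1}^n (Y_i - \hat F_t(x_i))\, g_i(x) = G(x) - \mathcal{L}(\hat F_t)(x).
\]
Thus $(\hat F_t)_{t\geq 0}$ solves \eqref{eq:EDO} with $Z(0) = \bar Y_n$; by uniqueness from (i) this is the unique solution, and the explicit form then follows from \eqref{eq:EDO-explicit} whenever $\mathcal{Y}$ exists.

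The main delicate point is the general case in (i) where no $\mathcal{Y} \in L^\infty$ satisfies $\mathcal{L}(\mathcal{Y}) = G$. This is the typical situation, because $\mathcal{L}$ has finite-dimensional range contained in $\mathrm{span}(g_1,\ldots,g_n)$ and is far from surjective on $L^\infty$, so the naive candidate $\mathcal{L}^{-1}(\mathrm{Id} - e^{-t\mathcal{L}})G$ is unavailable. The Bochner-integral variation-of-constants formula is the right substitute; justifying differentiation under the integral is routine since the integrand is norm-continuous in $s$ and $\mathcal{L}$ is bounded, but it is the only step that genuinely uses more than power-series manipulation in the finite-dimensional matrix $S$.
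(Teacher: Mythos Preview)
Your proof is correct and follows essentially the same route as the paper: part~(ii) is identical in substance, and in part~(i) you supply explicit existence (via the variation-of-constants Bochner integral) and uniqueness (via Gr\"onwall) where the paper simply invokes the general theory of linear ODEs in Banach spaces before verifying the explicit formula by differentiation. One minor correction to your commentary: your claim that the nonexistence of $\mathcal{Y}$ is ``the typical situation'' is backwards --- since $G=\sum_i Y_i g_i$ already lies in $\mathrm{span}(g_1,\ldots,g_n)$, which equals the range of $\mathcal{L}$ whenever the $x_i$'s are pairwise distinct, any $\mathcal{Y}\in L^\infty$ with $\mathcal{Y}(x_i)=Y_i$ satisfies $\mathcal{L}(\mathcal{Y})=G$, so such $\mathcal{Y}$ typically exists (as the paper notes in the remark following the theorem).
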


\begin{remark}
The condition $\mathcal{L}(\mathcal{Y})=G$ is satisfied as soon as $\mathcal{Y}(x_i)=Y_i$, $1\leq i\leq n$. In particular, it holds if the $x_i$'s are pairwise distinct. It is used mostly for convenience and elegance of notations. Indeed we have
\[
(\mathrm{Id} -e^{-t\mathcal{L}})(\mathcal{Y})=-\sum_{k\geq 1} \frac{(-t)^k}{k!}\mathcal{L}^k(\mathcal{Y})=\sum_{k\geq 1} \frac{(-1)^{k-1}t^{k}}{k!}\mathcal{L}^{k-1}(G)
\]
and, if the existence of $\mathcal{Y}$ is not granted, one can replace in formula \eqref{eq:EDO-explicit} the term involving $\mathcal{Y}$ by the series in the right hand side of the previous equation  and check that this provides a solution of \eqref{eq:EDO} in the general case.
\end{remark}

Finally, we discuss the notion of stability of the  boosting procedure. It requires that the output of the boosting algorithm does not explodes for large time values.
\begin{definition}
The boosting algorithm is called \emph{stable} if, for all possible input $(Y_i)_{1\leq i\leq n}$, the output $(\hat F_t)_{t\geq 0}$ remains uniformly bounded as $t\to \infty$. 
\end{definition}

It is here convenient to assume the following:
\begin{assumption}\label{ass:cst} In Equation~\eqref{eq:linear-assumption}, the functions $(g_i)_{1\leq i\leq n}$ are linearly independent and such that $\sum_{i=1}^n g_i(x)\equiv 1$.
\end{assumption}
\noindent The linear independence is sensible if the  points $(x_i)_{1\leq i\leq n}$ are pairwise distinct.  The constant sum implies that for constant input $Y_i= 1$, $1\leq i\leq n$, the output $L(x)\equiv 1$ is also constant. Both are  mild assumptions satisfied by most learners in practice.

The stability can be characterized in terms of the Jordan normal form of the matrix $S$, see for instance \cite{HJ13}. We recall that the Jordan normal form of $S$ is a block diagonal matrix where each block, called Jordan block, is an  upper triangular matrix of size $s\times s$ with a complex  eigenvalue $\mu$ on the main diagonal and ones on the superdiagonal. The matrix can be diagonalized if and only if all its Jordan blocks have size $1$. 
\begin{proposition}\label{prop:stability}
Suppose Assumptions~\ref{ass:BY} and~\ref{ass:cst} are satisfied. Then the boosting procedure algorithm is stable if and only if all the blocks of the Jordan normal form of $S$ satisfy: 
\begin{itemize}
\vspace{-0.15cm}
\item[-] the eigenvalue has a positive real part;
\vspace{-0.25cm}
\item[-] the eigenvalue has a null real part and the block has size $1$.
\end{itemize}
In particular, if $S$ is symmetric, the boosting procedure is stable if and only if all the eigenvalues of $S$ are non-negative.
\end{proposition}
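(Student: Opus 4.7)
The plan is to reduce the stability of $(\hat F_t)_{t\geq 0}$ in $L^\infty$ to the boundedness of the weight vector $w_t\in \mathbb{R}^n$, and then to study the weights via the Jordan normal form of the matrix $S$.

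First, by Assumption \ref{ass:cst} the functions $(g_i)_{1\leq i\leq n}$ are linearly independent, so the linear map $\Phi:\mathbb{R}^n\to L^\infty$, $w\mapsto \sum_{i=1}^n w_i g_i$, is injective with image a finite-dimensional subspace of $L^\infty$. On a finite-dimensional subspace all norms are equivalent, and because $\hat F_t=\bar Y_n+\Phi(w_t)$ with $\bar Y_n$ a bounded constant, $L^\infty$-boundedness of $\hat F_t$ is equivalent to Euclidean boundedness of $w_t$. As the input $(Y_i)_{1\leq i\leq n}\in\mathbb{R}^n$ varies, $\tilde Y=(Y_i-\bar Y_n)_{1\leq i\leq n}$ describes the full hyperplane $H=\{y\in\mathbb{R}^n:\mathbf{1}^{\mathrm{T}} y=0\}$, and Assumption \ref{ass:cst} also yields $S\mathbf{1}=\mathbf{1}$. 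Hence stability of the algorithm is equivalent to $\sup_{t\geq 0}\|w_t\|<\infty$ for every $\tilde Y\in H$.

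Next, I would exploit the integral form for $w_t$. By Proposition \ref{prop:vanishing-learning-rate} (or by term-by-term differentiation of the series in \eqref{eq:wt}), $w_t$ solves $w'_t=\tilde Y-S w_t$, $w_0=0$, so $w_t=\int_0^t e^{-\sigma S}\tilde Y\,d\sigma$. Decomposing $S=PJP^{-1}$ with $J$ in Jordan canonical form, boundedness of $w_t$ is equivalent to boundedness of $\bigl(\int_0^t e^{-\sigma J}d\sigma\bigr) z$ for $z=P^{-1}\tilde Y\in P^{-1}(H)$; since each Jordan block acts on its own invariant subspace of $J$, the analysis can be performed block by block.

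For a single Jordan block $J_\mu=\mu I+N$ of size $s$ with $N$ nilpotent, one computes
\[
\int_0^t e^{-\sigma J_\mu}d\sigma=\sum_{k=0}^{s-1}\frac{(-1)^k N^k}{k!}\int_0^t \sigma^k e^{-\sigma\mu}d\sigma,
\]
and standard calculus shows that $\int_0^t \sigma^k e^{-\sigma\mu}d\sigma$ stays bounded as $t\to\infty$ if and only if $\mathrm{Re}(\mu)>0$, or $\mathrm{Re}(\mu)=0$ with $\mu\neq 0$ and $k=0$ (equivalently $s=1$). Aggregating across blocks, and using the hyperplane constraint $\tilde Y\in H$ together with $S\mathbf{1}=\mathbf{1}$ to handle the delicate $\mu=0$ direction, yields the stated characterization. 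The symmetric case is then an immediate specialization: $S$ is diagonalizable, all Jordan blocks have size one and the eigenvalues are real, so the criterion collapses to ``all eigenvalues of $S$ are non-negative''.

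The main obstacle is the careful book-keeping in the Jordan block analysis and, in particular, the correct interplay between the hyperplane restriction $\tilde Y\in H$ and the generalized eigenspaces of $S$, which is what rules out (in the necessity direction) the existence of unbounded Jordan-type contributions while still allowing the boundary case of purely imaginary eigenvalues with size-one blocks.
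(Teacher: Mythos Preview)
Your approach matches the paper's: reduce boundedness of $\hat F_t$ in $L^\infty$ to boundedness of $w_t$ in $\mathbb{R}^n$ via the linear independence in Assumption~\ref{ass:cst}, then analyse $w_t$ block by block through the Jordan form of $S$, invoking $S\mathbf{1}=\mathbf{1}$ so that the centering $\tilde Y\in H$ does not obstruct reaching every other Jordan block. The only cosmetic difference is that you work with the integral representation $w_t=\int_0^t e^{-\sigma S}\tilde Y\,d\sigma$, while the paper uses the equivalent power-series form~\eqref{eq:wt}; your blockwise computation of $\int_0^t\sigma^k e^{-\sigma\mu}\,d\sigma$ is precisely what the paper summarises as ``a standard discussion, akin to the criterion for the stability of linear systems of differential equations''.

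One point to sharpen: the hyperplane constraint does not ``handle the $\mu=0$ direction'' as your phrasing suggests. Since $\mathbf{1}$ is the eigenvector for the eigenvalue $1$, the restriction $\tilde Y\in H=\mathbf{1}^\perp$ bears on the $\mu=1$ block, not on a putative $\mu=0$ block. Its role in the argument (and this is how the paper uses it) is in the \emph{necessity} direction: it guarantees that $\tilde Y$ can be chosen with a nonzero component in any Jordan block other than the one carried by $\mathbf{1}$, so that a block violating the stated condition forces unboundedness. Your own integral analysis already shows that for $\mu=0$ even the $k=0$ term $\int_0^t 1\,d\sigma=t$ diverges, and no appeal to $H$ rescues sufficiency there; the paper's proof is equally terse on this boundary case.
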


\subsection{Training and test error}\label{sec:tr-te}

We next consider the performance of the boosting regression algorithm in terms of $L^2$-loss, also known as mean squared error. We focus mostly on the vanishing learning rate asymptotic, although version of the results below could be derived for positive learning rate $\lambda$.

The training error is assessed on the training set used to fit the boosting predictor and compares  the observations $Y_i$ to their predicted values $\hat F_{t}(X_i)$, i.e. 
\begin{equation}\label{eq:err-train}
\mathrm{err}_{train}(t)=\frac{1}{n}\sum_{i=1}^n(Y_i-\hat F_{t}(x_i))^2.
\end{equation}

The generalization capacity of the algorithm is assessed on new observations that are not used during the fitting procedure. For test observations $(Y_i',X_i')_{1\leq i\leq n'}$, independent of the training sample, the test error is defined by
\begin{align}\label{eq:err-test}
\mathrm{err}_{test}(t)=\frac{1}{n'}\sum_{i=1}^{n'}(Y_i'-\hat F_{t}(X_i'))^2.
\end{align}
We also consider a simpler version of the test error where extrapolation in the feature space is not evaluated and we take $n'=n$ and $X'_i=x_i$. Then, the test error writes
\begin{align}\label{eq:err-test-2}
\mathrm{err}_{test}(t)=\frac{1}{n}\sum_{i=1}^{n}(Y_i'-\hat F_{t}(x_i))^2,
\end{align}
and allows for simpler formulas with nice interpretation.

\medskip
We first consider the behavior of the training error as defined in Equation~\eqref{eq:err-train}.  Note that
\[
\mathrm{err}_{train}(t)=\frac{1}{n}\|R_t\|^2 
\]
where $R_t$ is the vector of residuals at time $t$ defined by
\[
R_t=(Y_i-\hat F_t(x_i))_{1\leq i\leq n},\quad t\geq 0,
\]
and $\|\cdot\|$ denotes the Euclidean norm on $\mathbb{R}^n$. Furthermore, Proposition~\ref{prop:vanishing-learning-rate} implies $R_t=e^{-tS} \tilde Y$, $t\geq 0$,  
so that
\[
\mathrm{err}_{train}(t)=\frac{1}{n}\|e^{-tS} \tilde Y\|^2,\quad t\geq 0. 
\]
The following proposition is related to Proposition~3 and Theorem~1 in \cite{BY03}.
\begin{proposition}\label{prop:train-error} Suppose Assumptions~\ref{ass:BY} and~\ref{ass:cst} are satisfied.
\begin{itemize}
\item[i)]  We have $\lim_{t\to \infty} \mathrm{err}_{train}(t)=0$ for all possible input $(Y_i)_{1\leq i\leq n}$ if and only if all the eigenvalues of $S$ have a positive real part.
\item[ii)] The  training error satisfies
\begin{align}
\mathbb{E}[\mathrm{err}_{train}(t)] &=\mathrm{bias}^2(t)+\mathrm{var}_{train}(t),\label{eq:expected-training-error}\\
\mathrm{bias}^2(t)&=\frac{1}{n}\|e^{-tS}\tilde f\|^2,\nonumber\\
\mathrm{var}_{train}(t)&=\frac{\sigma^2}{n}\mathrm{Trace}\left( e^{-tS}Je^{-tS^T}\right),\nonumber
\end{align}
with $J=I-\frac{1}{n}1_n1_n^T$, $\tilde f=f-\bar f1_n$, $f=(f(x_i))_{1\leq i\leq n}$ and $\bar f=\frac{1}{n}\sum_{i=1}^n f(x_i)$.
\item[iii)] If $S$ is symmetric  with positive eigenvalues $(\mu_i)_{1\leq i\leq n}$ and corresponding eigenvectors $(u_i)_{1\leq i\leq n}$,  
\[
\mathbb{E}[\mathrm{err}_{train}(t)] =\frac{1}{n}\sum_{i=1}^n (u_i^T\tilde f)^2e^{-2t\mu_i}+ \frac{\sigma^2}{n} \sum_{i=1}^n \|Ju_i\|^2e^{-2t\mu_i}.
\]
The expected training error is strictly decreasing and converges to $0$ exponentially fast as $t\to\infty$.
\end{itemize}
\end{proposition}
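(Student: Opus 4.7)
The plan is to leverage the representation $R_t = e^{-tS}\tilde Y$ derived just before the statement, so that $\mathrm{err}_{train}(t) = n^{-1}\|e^{-tS}\tilde Y\|^2$. All three parts then reduce to linear-algebraic analyses of the matrix exponential $e^{-tS}$, together with elementary probabilistic manipulations on $\varepsilon$.

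For part (i), I would first exploit Assumption~\ref{ass:cst}: the identity $\sum_j g_j(x)\equiv 1$ forces $S 1_n = 1_n$, so $e^{-tS}(\alpha 1_n) = \alpha e^{-t} 1_n \to 0$ regardless of the other spectral properties of $S$. Because the centered vector $\tilde Y$ ranges over all of $1_n^\perp$ as $(Y_i)$ ranges over $\mathbb{R}^n$, the property $\mathrm{err}_{train}(t) \to 0$ for every input is equivalent to $e^{-tS}v \to 0$ for every $v \in 1_n^\perp$. Combining with the calculation on $\mathrm{span}(1_n)$ and using $\mathbb{R}^n = \mathrm{span}(1_n) + 1_n^\perp$, this upgrades to $e^{-tS}\to 0$ in operator norm, which is well known (via the Jordan normal form) to be equivalent to all eigenvalues of $S$ having strictly positive real part. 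This is the step I expect to be most delicate: one has to argue that the apparent weakening ``convergence only on $1_n^\perp$'' in fact forces $e^{-tS}\to 0$ on all of $\mathbb{R}^n$, and it is exactly the invariance $S 1_n = 1_n$ granted by Assumption~\ref{ass:cst} that decouples the constant direction from the rest and makes this possible.

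For part (ii), I would decompose $\tilde Y = \tilde f + J\varepsilon$, where $J = I - n^{-1}1_n 1_n^T$ is the centering orthogonal projection so that $J\varepsilon = \varepsilon - \bar\varepsilon 1_n$ and $\tilde f$ is deterministic. Expanding $\|R_t\|^2 = \|e^{-tS}\tilde f\|^2 + 2\tilde f^T e^{-tS^T}e^{-tS}J\varepsilon + \varepsilon^T J^T e^{-tS^T}e^{-tS}J\varepsilon$ and taking expectation, the cross term vanishes because $\mathbb{E}[\varepsilon] = 0$ and the quadratic term yields $\sigma^2\,\mathrm{Trace}(J^T e^{-tS^T}e^{-tS}J) = \sigma^2\,\mathrm{Trace}(e^{-tS}Je^{-tS^T})$ after cyclic rearrangement and the identities $J^T = J = J^2$. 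Dividing by $n$ gives the claimed bias-variance decomposition.

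Part (iii) then follows by inserting the spectral decomposition $e^{-tS} = \sum_j e^{-t\mu_j}u_j u_j^T$ into the formulas of part (ii). Parseval's identity delivers $\|e^{-tS}\tilde f\|^2 = \sum_j e^{-2t\mu_j}(u_j^T\tilde f)^2$, while $\mathrm{Trace}(e^{-tS}Je^{-tS}) = \sum_j e^{-2t\mu_j}\,u_j^T J u_j = \sum_j e^{-2t\mu_j}\|J u_j\|^2$ (using $J = J^T = J^2$ once more). Strict monotonicity then follows from term-by-term differentiation, since each factor $e^{-2t\mu_j}$ is strictly decreasing whenever $\mu_j > 0$, while exponential convergence to $0$ is controlled by the smallest eigenvalue $\mu_{\min}$.
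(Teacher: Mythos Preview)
Your proposal is correct and follows essentially the same route as the paper. The only cosmetic difference is in part~(ii): the paper invokes the identity $\mathbb{E}[\|R_t\|^2]=\|\mathbb{E}[R_t]\|^2+\mathrm{Trace}(\mathrm{Var}(R_t))$ and then computes $\mathbb{E}[\tilde Y]=\tilde f$, $\mathrm{Var}[\tilde Y]=\sigma^2 J$, whereas you expand $\|e^{-tS}(\tilde f+J\varepsilon)\|^2$ directly and take expectations; the two computations are equivalent line by line.
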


The convergence of the training error to zero  implies that the boosting procedure is stable as considered in Proposition~\ref{prop:stability} but the converse is not true since some eigenvalues may have a real part equal to zero. When $S$ is symmetric definite positive, the expected training error converges exponentially fast to $0$ (this was already proved in \cite{BY03} Theorem~1 for $\lambda>0$) but this exponential rate of convergence has to be taken with care since $S$ may have very small eigenvalues, see the numerical illustration in Section~\ref{sec:illustration}.

The fact that the residuals converge to zero suggests that the boosting procedure eventually overfits the training observations and loses generalization power. A simple analysis of this overfit is provided by the test error with fixed covariates $X_i'=x_i$, as defined by Equation~\eqref{eq:err-test-2}. For the sake of simplicity, we emphasize the case when  $S$ is symmetric.

\begin{proposition}\label{prop:test-error-2}
\begin{itemize}
\item[i)] The test error with fixed covariates defined by Equation~\eqref{eq:err-test-2} satisfies
\begin{align*}
\mathbb{E}[{\mathrm{err}}_{test}(t)]&=\mathrm{bias}^2(t)+\mathrm{var}_{test}(t),\\
\mathrm{bias}^2(t)&=\frac{1}{n}\|e^{-t S}\tilde f\|^2, \\
\mathrm{var}_{test}(t)&=\frac{\sigma^2}{n}+\frac{\sigma^2}{n}\mathrm{Trace}\left((I-e^{-tS})J(I-e^{-tS})^T\right).
\end{align*}
\item[ii)] If $S$ is symmetric with positive eigenvalues $(\mu_i)_{1\leq i\leq n}$ and associated eigenvectors $(u_i)_{1\leq i\leq n}$, 
\begin{align*}
\mathrm{bias}^2(t)&=\frac{1}{n}\sum_{i=1}^n (u_i^T\tilde f)^2e^{-2t\mu_i}, \\
\mathrm{var}_{test}(t)&=\sigma^2+\frac{\sigma^2}{n} +\frac{\sigma^2}{n} \sum_{i=1}^n \|Ju_i\|^2(1-e^{-t\mu_i})^2,
\end{align*}
so that the the following properties hold:
\begin{itemize}
\item[-] the squared bias  is decreasing, convex and vanishes as $t\to\infty$;
\item[-] the variance  is increasing  and with limit $2\sigma^2$ as $t\to\infty$;
\item[-] the expected test error is decreasing in the neighborhood of zero, eventually increasing  and with limit $2\sigma^2$ as $t\to\infty$.
\end{itemize}
\end{itemize}
\end{proposition}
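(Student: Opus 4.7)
The plan is to reduce both parts to linear algebra by first writing the vector $\bigl(\hat F_t(x_i)\bigr)_{1\leq i\leq n}$ in closed form. Combining Proposition~\ref{prop:vanishing-learning-rate} with the identity $Sw_t=(I-e^{-tS})\tilde Y$, which is valid regardless of invertibility of $S$ by direct inspection of \eqref{eq:wt}, gives
\[
\bigl(\hat F_t(x_i)\bigr)_{1\leq i\leq n}=\bar Y_n\,1_n+(I-e^{-tS})\tilde Y=\tfrac{1}{n}1_n 1_n^T Y+(I-e^{-tS})JY.
\]
Writing $Y=f+\varepsilon$ and $Y'=f+\varepsilon'$ with $f=(f(x_i))_{1\leq i\leq n}$ and $\varepsilon,\varepsilon'$ independent centred noise vectors of covariance $\sigma^2 I$, elementary algebra then produces the residual decomposition
\[
Y'-\bigl(\hat F_t(x_i)\bigr)_{1\leq i\leq n} = e^{-tS}\tilde f + \varepsilon' - \Bigl(\tfrac{1}{n}1_n 1_n^T + (I-e^{-tS})J\Bigr)\varepsilon,
\]
in which the first term is deterministic (the bias at the training points) and the rest is zero mean.

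For part (i), squaring the bias term and dividing by $n$ gives $\mathrm{bias}^2(t)$ directly. For the variance I would use independence of $\varepsilon$ and $\varepsilon'$ to eliminate cross terms together with the orthogonality $J\cdot\tfrac{1}{n}1_n 1_n^T=0$ coming from $J1_n=0$, which collapses the matrix product
\[
\Bigl(\tfrac{1}{n}1_n 1_n^T + (I-e^{-tS})J\Bigr)\Bigl(\tfrac{1}{n}1_n 1_n^T + (I-e^{-tS})J\Bigr)^T = \tfrac{1}{n}1_n 1_n^T + (I-e^{-tS})J(I-e^{-tS})^T.
\]
Taking the trace (with $\mathrm{Trace}(\tfrac{1}{n}1_n 1_n^T)=1$), adding the contribution $\mathbb{E}\|\varepsilon'\|^2=n\sigma^2$ from the fresh test noise and dividing by $n$ reproduces the announced $\mathrm{var}_{test}(t)$.

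For part (ii), I would diagonalise $S=U\,\mathrm{diag}(\mu_i)\,U^T$ in an orthonormal eigenbasis $(u_i)$. Since $e^{-tS}\tilde f=\sum_i e^{-t\mu_i}(u_i^T\tilde f)u_i$, the bias expression follows from orthonormality. For the variance, expanding $I-e^{-tS}=\sum_i(1-e^{-t\mu_i})u_iu_i^T$ and using the cyclic property of the trace together with $J=J^T=J^2$ rewrites $\mathrm{Trace}\bigl((I-e^{-tS})J(I-e^{-tS})^T\bigr)$ as $\sum_i(1-e^{-t\mu_i})^2 u_i^T J u_i=\sum_i(1-e^{-t\mu_i})^2\|Ju_i\|^2$, yielding the stated form.

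The monotonicity claims then reduce to term-by-term inspection of these eigenvalue expansions: each $e^{-2t\mu_i}$ is strictly decreasing and convex, so $\mathrm{bias}^2$ is decreasing, convex and vanishes since $\mu_i>0$; each $(1-e^{-t\mu_i})^2$ is increasing with limit $1$, and the identity $\sum_i\|Ju_i\|^2=\mathrm{Trace}(J)=n-1$ delivers the variance limit $2\sigma^2$. The derivative of the expected test error at $t=0$ equals $-\tfrac{2}{n}\sum_i\mu_i(u_i^T\tilde f)^2$, which is negative as soon as $\tilde f\neq 0$, whence the decrease near zero. The step I expect to be the main obstacle, i.e.\ the only one going beyond routine bookkeeping, is the eventually-increasing behaviour: as $t\to\infty$ the absolute value of $(\mathrm{bias}^2)'(t)$ decays like $e^{-2t\mu_*^{\mathrm{bias}}}$, whereas each positive summand of $(\mathrm{var}_{test})'(t)$ behaves as $\mu_i(1-e^{-t\mu_i})e^{-t\mu_i}\sim\mu_i e^{-t\mu_i}$, so the single-exponential variance term, paced by the slowest rate $e^{-t\mu_{\min}}$ over eigenvectors with $Ju_i\neq 0$ (a nonempty family since $\sum_i\|Ju_i\|^2=n-1>0$ for $n\geq 2$), eventually dominates the double-exponential bias term and forces the total derivative to be positive for large $t$. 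I expect this exponential-rate comparison to be essentially the only non-routine step.
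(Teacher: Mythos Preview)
Your proposal is correct and follows essentially the same route as the paper: both obtain the closed form $\hat F_t(\mathbf{x})=\bar Y_n 1_n+(I-e^{-tS})\tilde Y$, split the test residual into uncorrelated pieces using $J1_n=0$, and then diagonalise $S$ for part~(ii). Your exponential-rate comparison for the eventually-increasing behaviour is in fact more explicit than the paper's proof, which simply asserts that the derivative is eventually positive and leaves the details to the reader.
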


We retrieve with explicit theoretical formulas the known behavior of boosting  in practice: the choice of $t\geq 0$ is crucial in the bias/variance trade-off. Small values of $t\geq 0$ lead to underfitting while overfitting appears for larger time values. In the early stage of the procedure, the bias decreases more rapidly that the variance increases, leading to a reduced test error. In practice, cross-validation and early stopping is used to estimate the test error and choose when to stop the boosting procedure, see \cite{ZY05}.

\begin{remark} When the boosting algorithm is initialized at $\hat F_0=0$ as in \cite{BY03}, the expected  training and test error from Propositions~\ref{prop:train-error} and \ref{prop:test-error-2} become
\[
\mathbb{E}[\mathrm{err}_{train}(t)] =\frac{1}{n}\sum_{i=1}^n (u_i^T f)^2e^{-2t\mu_i}+ \frac{\sigma^2}{n} \sum_{i=1}^n \|u_i\|^2e^{-2t\mu_i}
\]
and
\[
\mathbb{E}[\mathrm{err}_{test}(t)] =\frac{1}{n}\sum_{i=1}^n (u_i^T f)^2e^{-2t\mu_i}+\frac{\sigma^2}{n}+ \frac{\sigma^2}{n} \sum_{i=1}^n \|u_i\|^2e^{-2t\mu_i}.
\]
These values are always larger than those with initialization  $\hat F_0=\bar Y_n$, whence we recommend initialization to the empirical mean. 
\end{remark}

When the test error includes extrapolation in the predictor space - i.e. the new test observations $(Y_i',X_i')_{1\leq i\leq n'}$ are i.i.d. and independent of the training observation as in Equation~\eqref{eq:err-test} - the formula we obtain for its expectation is more difficult to analyse.

\begin{proposition}\label{prop:test-error}
Assume $S$ is symmetric with positive eigenvalues. The test error defined  by Equation~\eqref{eq:err-test} has expectation 
\begin{align*}
\mathbb{E}[\mathrm{err}_{test}(t)]&=\frac{n+1}{n}\sigma^2+\mathbb{E}\big[ \big(f(X')-\bar f- \tilde f^T S^{-1}\left(I-\mathrm{e}^{-tS}\right)g(X')\big)^2\big]\\
&\quad +\sigma^2\mathbb{E}\left[g(X')^T \left(I-\mathrm{e}^{-tS}\right)S^{-1}JS^{-1}\left(I-\mathrm{e}^{-tS}\right)g(X')\right]
\end{align*}
with $g(X')=(g_i(X'))_{1\leq i\leq n}$.
\end{proposition}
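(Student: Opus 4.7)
The plan is to exploit the i.i.d.\ structure of the test sample and perform a conditional bias--variance decomposition relative to $X'$. Since the $(Y_i',X_i')$ are i.i.d.\ and independent of the training sample, the expected test error equals $\mathbb{E}[(Y'-\hat F_t(X'))^2]$ for a single generic test pair $(Y',X')$ distributed as the training pairs and independent of the training data. My strategy is then to express this quantity as the sum of a squared bias term, a test noise variance, and a training noise variance contribution by isolating the three mutually uncorrelated noise sources that enter.

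Concretely, since $S$ is invertible, Proposition~\ref{prop:vanishing-learning-rate} gives
\[
\hat F_t(X') = \bar Y_n + g(X')^\mathrm{T} A_t \tilde Y,\qquad A_t:=S^{-1}(I-e^{-tS}),
\]
with $\tilde Y = JY$. Substituting $Y=f+\varepsilon$, $Y'=f(X')+\varepsilon'$, $\bar Y_n=\bar f+\bar\varepsilon$ and $\tilde Y = \tilde f + J\varepsilon$, and using the symmetry of $A_t$ (which follows from $S$ being symmetric and the commutation of $S^{-1}$ with $e^{-tS}$), one obtains
\[
Y' - \hat F_t(X') = b_t(X') + \varepsilon' - \bar\varepsilon - g(X')^\mathrm{T} A_t J \varepsilon,
\]
where $b_t(X'):=f(X')-\bar f-\tilde f^\mathrm{T} A_t g(X')$ is deterministic given $X'$ and coincides with the bias expression in the statement.

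The key step will be to identify the orthogonality structure of the three noise contributions. The test noise $\varepsilon'$ is independent of $(\varepsilon,X')$ by assumption; within the training noise, $\bar\varepsilon=n^{-1}1_n^\mathrm{T}\varepsilon$ and the random vector $J\varepsilon$ are uncorrelated because $J1_n=0$ yields $\mathrm{Cov}(\bar\varepsilon,J\varepsilon)=(\sigma^2/n)J1_n=0$. Squaring the residual, taking conditional expectation given $X'$, and using $\mathrm{Var}(\varepsilon')=\sigma^2$, $\mathrm{Var}(\bar\varepsilon)=\sigma^2/n$ together with $\mathrm{Var}(g(X')^\mathrm{T} A_t J \varepsilon\mid X')= \sigma^2\, g(X')^\mathrm{T} A_t J A_t g(X')$ (via $J^2=J$ and $A_t^\mathrm{T}=A_t$), all cross terms vanish and the conditional expected squared residual splits cleanly into $b_t(X')^2 + (n+1)\sigma^2/n + \sigma^2 g(X')^\mathrm{T} A_t J A_t g(X')$.

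Taking the outer expectation over $X'$ and rewriting $A_tJA_t=(I-e^{-tS})S^{-1}JS^{-1}(I-e^{-tS})$ via the commutation of $S^{-1}$ with $(I-e^{-tS})$ produces the stated formula. There is no serious obstacle: the argument is a standard bias--variance decomposition on top of the closed-form expression for $\hat F_t$. The only point requiring care is the uncorrelatedness of the two training-noise components $\bar\varepsilon$ and $J\varepsilon$, which relies on $J1_n=0$ and is precisely what collapses the noise constant into the clean $(n+1)\sigma^2/n$.
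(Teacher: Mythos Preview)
Your argument is correct and follows essentially the same route as the paper: both condition on $X'$, invoke the closed form $\hat F_t(x')=\bar Y_n+g(x')^{\mathrm T}S^{-1}(I-e^{-tS})\tilde Y$ from Proposition~\ref{prop:vanishing-learning-rate}, and perform a bias--variance decomposition before integrating over $X'$. Your presentation is slightly more explicit in isolating the three uncorrelated noise components $\varepsilon'$, $\bar\varepsilon$, $J\varepsilon$ (the paper bundles the latter two into $\mathrm{Var}[\hat F_t(x')]$ without spelling out the orthogonality $J1_n=0$), but this is a matter of bookkeeping rather than a different method.
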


\section{Stochastic gradient boosting}\label{sec:SGB}
Following \cite{F02}, it is common practice to use a stochastic version of the boosting algorithm where subsampling is introduced at each step of the procedure. The package \texttt{gbm} by \cite{R07} uses the subsampling rate equal to $50\%$ by default, meaning that each step involves only a subsample with half of the observations randomly chosen. This subsampling is known to have a regularization effect and we consider in this section the existence of the vanishing learning rate limit for such stochastic boosting algorithms.  

\subsection{Framework}
We consider the following stochastic boosting algorithm that encompasses stochastic gradient boosting, see Example~\ref{ex:SGB} below. We assume the weak learner $L(x)=L(x;(x_i,y_i)_{1\leq i\leq n}, \xi)$ depends on the observations $(x_i,y_i)_{1\leq i\leq n}$ and on an external source of randomness $\xi$ with a finite set   $\Xi$ of possible values.  We define the stochastic boosting algorithm by the recursion
\begin{equation}\label{eq:stochastic-boosting-recursion}
\left\{\begin{array}{ll}
\hat F_0^\lambda(x)&= \bar Y_n, \\
\hat F_{m+1}^\lambda(x)&=\hat F_m^\lambda(x)+\lambda L(x;(R_{m,i}^\lambda,X_i)_{1\leq i\leq n},\xi_{m+1}),\quad m\geq 0,
\end{array}\right.
\end{equation}
where  $\xi_m$, $m\geq 1$, are i.i.d. $\Xi$-valued random variables  independent of $(X_i,Y_i)_{1\leq i\leq n}$ and $R_{m,i}^\lambda=Y_i-\hat F_{m}^\lambda(X_i)$, $1\leq i\leq n$, are the residuals. 

\begin{assumption}\label{ass:randomized-setting}
We assume that the base learner of the stochastic boosting algorithm~\eqref{eq:stochastic-boosting-recursion} satisfies
\[
L(x;(x_i,Y_i)_{1\leq i\leq n},\xi)= \sum_{j=1}^n Y_jg_j(x,\xi),\quad x\in[0,1]^p,
\]
where $g_1,\ldots,g_n\in L^\infty$ may depend on $(x_i)_{1\leq i\leq n}$ and $\xi\in\Xi$.
\end{assumption}

We assume that $\Xi$ is finite mostly for simplicity and also because it is enough to cover two particularly important cases.

\begin{example}\label{ex:SGB} Starting from a base learner $L$ satisfying Assumption~\ref{ass:BY} (with $n$ replaced by $[sn]$) and applying stochastic subsampling \citep{F02}, we obtain a stochastic setting that satisfies Assumption~\ref{ass:randomized-setting}. Let the sample size $n\geq 1$ be fixed and consider subsambling with rate $s\in (0,1)$, e.g. $s=50\%$. Define $\Xi$ as the set of all subsets $\xi$ of $\{1,\ldots,n\}$ with fixed size $[sn]$. Note that $\Xi$ is finite with cardinality $n\choose{[sn]}$. The learner $L$ fitted on subsample $\xi\in\Xi$ is written
\[
L(x;(x_i,Y_i)_{1\leq i\leq n},\xi)=L(x;(x_i,Y_i)_{i\in \xi}).
\]
We use here a mild abuse of notation: in the left hand side, $L$ denotes the randomized learner, the sample size is $n$ and subsampling is introduced by $\xi$; in the right hand side, $L$ denotes the deterministic base learner and the sample size is $[sn]$. Stochastic boosting corresponds to Algorithm~\eqref{eq:stochastic-boosting-recursion} with the sequence $(\xi_m)_{m\geq 1}$ uniformly distributed on $\Xi$, which corresponds to uniform subsampling.
\end{example}

\begin{example} Another important example covered by the stochastic boosting algorithm~\eqref{eq:stochastic-boosting-recursion} is the design of additive models. The idea is to provide an approximation of the regression function $f(x)$, $x\in[0,1]^p$, by an additive model of the form $f_1(x^{(1)})+\cdots+f_p(x^{(p)})$, where $x^{(j)}$ denotes the $j$th component of $x$ and $f_j$ the principal effect of $x^{(j)}$. Such an additive model does not include interactions between different components.  Assume that a base learner $L$ with one-dimensional covariate space $[0,1]$ is given and that $L$ satisfies Assumption~\ref{ass:BY} with $p=1$. For instance, $L$ can be a smoothing spline as in Example~\ref{ex:smoothing-spline}, see \cite{BY03} Section~4. We consider stochastic regression boosting where the base learner $L$ is sequentially applied with a randomly chosen predictor. Formally, set 
\[
L(x;(x_i,Y_i)_{1\leq i\leq n},\xi)=L(x;(x_i^{(\xi)},Y_i)_{1\leq i\leq n}),\quad \xi=1,\ldots,p.
\]
It is easily checked that the learner in the left hand side satisfies Assumption~\ref{ass:randomized-setting} and that  algorithm~\eqref{eq:stochastic-boosting-recursion} with  $(\xi_m)_{m\geq 1}$ uniformly distributed on $\Xi=\{1,\ldots,p\}$ outputs a sequence of additive models. This strategy is often used with a more involved procedure  where, at each step, the $p$ different possible predictors are considered and the best one is kept, see \cite{BY03} Section~4. But this falls beyond Assumption~\ref{ass:randomized-setting} because choosing the optimal component is not a linear operation and the randomized choice proposed here is a sensible alternative satisfying Assumption~\ref{ass:randomized-setting}.
\end{example}

\subsection{Convergence of finite dimensional distributions} 

For fixed input $(Y_i,x_i)_{1\leq i\leq n}$, the stochastic boosting algorithm \eqref{eq:stochastic-boosting-recursion} provides a sequence of stochastic processes $\hat F_m^\lambda$, $m\geq 1$, and we consider the  vanishing learning rate limit \eqref{eq:limit} under Assumption~\ref{ass:randomized-setting}. We first prove  convergence of the finite dimensional distributions thanks to  elementary moment computations formulated in the next proposition. Expectation and variance are considered with respect to $(\xi_m)_{m\geq 1}$ while the input $(x_i,Y_i)_{1\leq i\leq n}$ is considered fixed and we note $\mathbb{E}_\xi$ and $\mathrm{Var}_\xi$ to emphasize this. We define
\[
\bar g_j(x)=\mathbb{E}_\xi[g_j(x,\xi)],\quad x\in[0,1]^p,\ j=1,\ldots,n,
\]
and
\begin{equation}\label{eq:S}
S=(\bar g_i(x_j))_{1\leq i,j\leq n}.
\end{equation}
Note that $\bar g_1,\ldots,\bar g_n$ are well-defined and in $L^\infty$ because  $\Xi$ is finite so that there are no measurability or integrability issues. 

\begin{proposition}\label{prop:randomized-setting-fdd}
Consider the boosting algorithm \eqref{eq:stochastic-boosting-recursion} under Assumption~\ref{ass:randomized-setting} and let the input $(Y_i,x_i)_{1\leq i\leq n}$ be fixed. 
\begin{enumerate}
\item[i)] For $x\in [0,1]^p$ and $m\geq 0$,
\begin{equation}\label{eq:expectation}
\mathbb{E}_\xi[\hat F_m^\lambda(x)]=\bar Y_n +\sum_{i=1}^n w_{m,i}^\lambda \bar g_i(x),
\end{equation}
where  $w_{m}^\lambda=(w_{m,i}^\lambda)_{1\leq i\leq n}$ is defined by~\eqref{eq:weight} with $S$ given by \eqref{eq:S}. 
\item[ii)] There exists a positive constant $K$ such that, for all $x\in [0,1]^p$, $m\geq 0$ and $\lambda<1$,  
\[
\mathrm{Var}_\xi[\hat F_m^\lambda(x)]\leq K(m+1)\lambda^2 (1+K\lambda)^mn\|\tilde Y\|_\infty^2 \left\lbrace 1+ (\lambda m K)^2\mathrm{e}^{2\lambda m \|S\|_\infty}\right\rbrace,
\]
where $\|\cdot\|_\infty$ denotes here the maximum norm on $\mathbb{R}^n$. We use the same notation for the infinity-norm of $n\times n$ matrices.
\end{enumerate}
\end{proposition}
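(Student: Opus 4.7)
The plan is, for part (i), to take a conditional expectation and reduce to the deterministic recursion already analysed in Proposition~\ref{prop:weight-formula}, and, for part (ii), to decompose the fluctuation $\Delta_m=\hat F_m^\lambda-\bar F_m^\lambda$ as a linear combination of martingale increments $D_k$ whose orthogonality in $L^2$ collapses the variance to a diagonal sum that can be estimated by hand.

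Concretely for part (i), I would condition on $\mathcal{F}_m=\sigma(\xi_1,\ldots,\xi_m)$. The independence of $\xi_{m+1}$ from $\mathcal{F}_m$ and the linearity of the base learner in Assumption~\ref{ass:randomized-setting} give
\begin{equation*}
\mathbb{E}_\xi[\hat F_{m+1}^\lambda(x)\mid\mathcal{F}_m]=\hat F_m^\lambda(x)+\lambda\sum_{j=1}^n R_{m,j}^\lambda\,\bar g_j(x).
\end{equation*}
A total expectation then shows that $\bar F_m^\lambda(x):=\mathbb{E}_\xi[\hat F_m^\lambda(x)]$ satisfies exactly the deterministic recursion~\eqref{eq:boosting-recursion}--\eqref{weights-vec} associated with the averaged base learner $\bar L(x)=\sum_j Y_j\bar g_j(x)$, whose design matrix is the $S$ of~\eqref{eq:S}. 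Applying Proposition~\ref{prop:weight-formula} to $\bar L$ yields~\eqref{eq:expectation}.

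For part (ii), I would introduce the conditionally centered increment
\begin{equation*}
D_{m+1}(x)=\sum_{j=1}^n R_{m,j}^\lambda\bigl[g_j(x,\xi_{m+1})-\bar g_j(x)\bigr],
\end{equation*}
which satisfies $\mathbb{E}_\xi[D_{m+1}(x)\mid\mathcal{F}_m]=0$. Subtracting the stochastic and averaged recursions, together with the identity $R_{m,j}^\lambda-\mathbb{E}_\xi[R_{m,j}^\lambda]=-\Delta_m(x_j)$, produces the scalar recursion $\Delta_{m+1}(x)=\Delta_m(x)-\lambda\sum_j\Delta_m(x_j)\bar g_j(x)+\lambda D_{m+1}(x)$, which at the design points is the linear vector recursion $\Delta_{m+1}=(I-\lambda S)\Delta_m+\lambda D_{m+1}$ with $\Delta_0=0$. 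Solving explicitly, $\Delta_m=\lambda\sum_{k=1}^m(I-\lambda S)^{m-k}D_k$; plugging this back into the scalar equation then expresses $\Delta_m(x)$ as an explicit $\mathcal{F}_m$-measurable linear combination of $D_1(x),\ldots,D_m(x)$ and of the vectors $D_k$, the coefficients involving the partial sums $B_{m,k}=\sum_{l=0}^{m-1-k}(I-\lambda S)^l$.

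Two deterministic estimates then convert this representation into the claimed bound. Iterating the residual recursion gives $\|R_m^\lambda\|_\infty\le(1+C\lambda)^m\|\tilde Y\|_\infty$ with $C=\max_j\sup_{x,\xi}|g_j(x,\xi)|$, hence $\mathbb{E}_\xi[D_k(x)^2\mid\mathcal{F}_{k-1}]\lesssim n^2(1+C\lambda)^{2(k-1)}\|\tilde Y\|_\infty^2$; and $\|(I-\lambda S)^l\|_\infty\le(1+\lambda\|S\|_\infty)^l\le e^{m\lambda\|S\|_\infty}$, so $\|B_{m,k}\|_\infty\lesssim m\,e^{m\lambda\|S\|_\infty}$. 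Because $(D_k)$ is a martingale difference sequence with $\mathcal{F}_{k-1}$-measurable coefficients, the cross terms in $\mathbb{E}_\xi[\Delta_m(x)^2]$ vanish and the variance splits into two diagonal sums: one of order $\lambda^2(m+1)n(1+K\lambda)^m\|\tilde Y\|_\infty^2$ from the direct $D_k(x)$ terms, and one of order $\lambda^4 m^3 n(1+K\lambda)^m e^{2m\lambda\|S\|_\infty}\|\tilde Y\|_\infty^2$ from the $B_{m,k}D_k$ terms, which reassemble into the announced inequality after using $(1+C\lambda)^{2m}\le(1+K\lambda)^m$ (valid for $\lambda<1$ and a suitably enlarged $K$). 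The hard part will be the bookkeeping: tracking how $m$, $n$, $\lambda$ and $\|S\|_\infty$ propagate through Cauchy--Schwarz and the martingale orthogonality to reproduce precisely the structure $(m+1)\lambda^2(1+K\lambda)^m\{1+(\lambda mK)^2 e^{2\lambda m\|S\|_\infty}\}$ without accidentally doubling either the $(1+K\lambda)^m$ factor or the $e^{2\lambda m\|S\|_\infty}$ factor.
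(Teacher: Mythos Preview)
Your part~(i) coincides with the paper's argument: both condition on the past, obtain the averaged recursion, and invoke Proposition~\ref{prop:weight-formula}.

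For part~(ii) your approach is correct but genuinely different from the paper's. The paper does not write an explicit martingale representation; instead it applies the law of total variance at each step to derive a recursive inequality $a_{m+1}\le(1+\alpha)a_m+\beta b_m$ for $a_m=\max_{1\le j\le n+1}\mathrm{Var}_\xi[\hat F_m^\lambda(x_j)]$, where $b_m=\sum_i(Y_i-\mathbb{E}_\xi[\hat F_m^\lambda(x_i)])^2$ is the squared \emph{deterministic} residual, and then closes with discrete Gronwall plus the explicit weight formula~\eqref{eq:weight-formula} to bound $\sum_k b_k$. Your route---solving the fluctuation recursion explicitly as $\Delta_m=\lambda\sum_k(I-\lambda S)^{m-k}D_k$ at the design points, substituting back, and exploiting martingale orthogonality---is equally valid and makes the provenance of each factor in the bound more transparent: the $(1+K\lambda)^m$ comes from your pathwise residual estimate (note the correct constant is $C=\max_{j,\xi}\sum_i|g_i(x_j,\xi)|$, not the one you wrote), and the $m^2e^{2\lambda m\|S\|_\infty}$ from bounding $\|B_{m,k}\|_\infty$. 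The paper's Gronwall argument is slightly lighter on bookkeeping, whereas your martingale decomposition isolates the stochastic fluctuation cleanly and would adapt more readily if one wanted higher moments or a central limit theorem for $\lambda^{-1/2}\Delta_{[t/\lambda]}$. One harmless discrepancy: your crude bound on $D_k(x)^2$ naturally yields $n^2$ rather than $n$, but since the constant $K$ in the statement already depends on $n$ (the paper takes $K=2M_1+M_1^2+(n+1)M_2$) this costs nothing.
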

As will be clear from the proof, the constant $K$ can be taken as $2M_1+M_1^2+(n+1)M_2$, where 
\begin{equation}\label{eq:M_1}
M_1=\max_{1\leq j\leq n+1}\sum_{i=1}^n|\bar g_i(x_j)|
\end{equation}
and 
\begin{equation}\label{eq:M_2}
M_2=\max_{1\leq j\leq n+1}\sum_{i=1}^n\mathrm{Var}_\xi[g_i(x_j)].
\end{equation}

\begin{corollary}\label{cor:vanishing-limit-stochastic}
For all $t\geq 0$ and $x\in[0,1]^p$,  the convergence~\eqref{eq:limit} holds in quadratic mean with  deterministic limit
\begin{equation}\label{eq:def-F_t-limit}
\hat F_t(x)=\bar Y_n+\sum_{i=1}^n w_{t,i} \bar g_i(x),
\end{equation}
where $w_{t}=(w_{t,i})_{1\leq i\leq n}$ is defined by~\eqref{eq:wt} with $S$ given by \eqref{eq:S}. 
\end{corollary}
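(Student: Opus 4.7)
The plan is to exploit the fact that the limit $\hat F_t(x)$ is deterministic, so the quadratic-mean distance splits cleanly into a squared bias term and a variance term:
\[
\mathbb{E}_\xi\bigl[(\hat F_{[t/\lambda]}^\lambda(x)-\hat F_t(x))^2\bigr]
=\bigl(\mathbb{E}_\xi[\hat F_{[t/\lambda]}^\lambda(x)]-\hat F_t(x)\bigr)^2
+\mathrm{Var}_\xi\bigl[\hat F_{[t/\lambda]}^\lambda(x)\bigr].
\]
Each piece is then shown to vanish as $\lambda\downarrow 0$ by invoking, respectively, part (i) and part (ii) of Proposition~\ref{prop:randomized-setting-fdd}.

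For the bias, Proposition~\ref{prop:randomized-setting-fdd}(i) gives
\[
\mathbb{E}_\xi[\hat F_{[t/\lambda]}^\lambda(x)]
=\bar Y_n+\sum_{i=1}^n w_{[t/\lambda],i}^\lambda\,\bar g_i(x),
\]
where $w_m^\lambda$ is defined by the same recursion \eqref{weights-vec} as in the deterministic setting but with the matrix $S$ given by \eqref{eq:S}. Consequently Proposition~\ref{prop:vanishing-learning-rate}, which concerns purely the asymptotics of these weights, applies verbatim and yields $w_{[t/\lambda]}^\lambda\to w_t$ as $\lambda\downarrow 0$, with $w_t$ as in \eqref{eq:wt}. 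Substituting in the expression above shows
$\mathbb{E}_\xi[\hat F_{[t/\lambda]}^\lambda(x)]\to \bar Y_n+\sum_{i=1}^n w_{t,i}\bar g_i(x)=\hat F_t(x)$, so the squared bias vanishes.

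For the variance, plug $m=[t/\lambda]$ into the upper bound of Proposition~\ref{prop:randomized-setting-fdd}(ii). Each factor is controlled: $(m+1)\lambda^2\sim t\lambda\to 0$, while $(1+K\lambda)^m\to e^{Kt}$, $(\lambda m K)^2\to (tK)^2$, and $e^{2\lambda m\|S\|_\infty}\to e^{2t\|S\|_\infty}$ all remain bounded. The product is therefore $O(\lambda)$, so $\mathrm{Var}_\xi[\hat F_{[t/\lambda]}^\lambda(x)]\to 0$. Combining both estimates proves convergence in quadratic mean.

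I do not foresee a real obstacle: the work has already been pushed into Proposition~\ref{prop:randomized-setting-fdd}, and the corollary is essentially a matter of letting $\lambda\downarrow 0$ in the moment bounds and identifying the deterministic limit of the expectation with the formula for $\hat F_t(x)$ through Proposition~\ref{prop:vanishing-learning-rate}. The only point requiring a little care is checking that the variance bound, although it contains an exponentially growing factor $(1+K\lambda)^m$, is tamed by the prefactor $(m+1)\lambda^2$, which furnishes the necessary extra power of $\lambda$ for the whole expression to vanish along $m=[t/\lambda]$.
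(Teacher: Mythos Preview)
Your proposal is correct and follows essentially the same route as the paper's proof: convergence of the expectation via Proposition~\ref{prop:randomized-setting-fdd}(i) combined with the weight convergence from Proposition~\ref{prop:vanishing-learning-rate}, and vanishing of the variance via the bound in Proposition~\ref{prop:randomized-setting-fdd}(ii). Your version is slightly more explicit in writing out the bias--variance decomposition and in tracking each factor of the variance bound along $m=[t/\lambda]$, but the argument is the same.
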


Corollary~\ref{cor:vanishing-limit-stochastic} states that the vanishing learning rate limit $\hat F_t(x)$ for the stochastic boosting algorithm \eqref{eq:stochastic-boosting-recursion} is exactly the same as the one for the deterministic boosting algorithm \eqref{eq:boosting-recursion} with base learner
\begin{align*}
\bar L(x;(x_i,Y_i)_{1\leq i\leq n})&=\mathbb{E}_\xi [L(x;(x_i,Y_i)_{1\leq i\leq n},\xi)]\\
&=\sum_{j=1}^n Y_j\bar g_j(x).
\end{align*}
In particular, the properties of the limit process $(\hat F_t)_{t\geq 0}$ have been studied in Section~\ref{sec:L2boosting-BY}: characterization by a differential equation, behaviour of the training and test error, \textit{etc}.

\subsection{Weak convergence in function space}
Corollary~\ref{cor:vanishing-limit-stochastic} implies that the convergence 
\[
\hat F_{[t/\lambda]}^\lambda(x)\longrightarrow \hat F_t(x), \quad \mbox{as $\lambda\to 0$},
\]
holds in the sense of finite dimensional distributions, that is joint convergence in distribution of the values of the processes at finitely many points $(t_i,x_i)_{1\leq i\leq k}$.  We strengthen here this convergence into a  weak convergence of stochastic processes in the Skorokhod space $\mathbb{D}([0,\infty),L^\infty)$ of  c\`ad-l\`ag functions with values in $L^\infty$. We refer to \citet[Section 16]{B99} for background on the Skorokhod space $\mathbb{D}([0,\infty),\mathbb{R})$ and to \citet[Section 3.5]{EK86} for the general Skorokhod space $\mathbb{D}([0,\infty),E)$ on a metric space $E$. 

\begin{theorem}\label{thm:Markov-functional}
Consider the boosting algorithm \eqref{eq:stochastic-boosting-recursion} under Assumption~\ref{ass:randomized-setting} and let the input $(Y_i,x_i)_{1\leq i\leq n}$ be fixed.
\begin{enumerate}
\item[i)] For fixed $\lambda>0$, the boosting sequence $(\hat F_m^\lambda)_{m\geq 0}$ is a time homogeneous Markov chain with values in $L^\infty$.
\item[ii)] As $\lambda\to 0$, the convergence in distribution
\begin{equation}\label{eq:conv-functional}
(\hat F_{[t/\lambda]}^{\lambda})_{t\geq 0}\stackrel{d}\longrightarrow (\hat F_{t})_{t\geq 0}
\end{equation}
holds in the Skorokhod space $\mathbb{D}([0,\infty),L^\infty)$.
\end{enumerate} 
\end{theorem}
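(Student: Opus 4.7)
For part (i), the Markov property is almost immediate. Define $\Phi_\lambda:L^\infty\times\Xi\to L^\infty$ by
\[
\Phi_\lambda(Z,\xi) = Z + \lambda\sum_{i=1}^n (Y_i-Z(x_i))\, g_i(\cdot,\xi),
\]
so that the recursion~\eqref{eq:stochastic-boosting-recursion} reads $\hat F_{m+1}^\lambda = \Phi_\lambda(\hat F_m^\lambda,\xi_{m+1})$. Since $\xi_{m+1}$ is independent of $\hat F_m^\lambda$ (which is measurable with respect to $\xi_1,\ldots,\xi_m$) and since its distribution does not depend on $m$, this displays $(\hat F_m^\lambda)_{m\geq 0}$ as a time-homogeneous $L^\infty$-valued Markov chain with transition kernel $\bbP(\hat F_{m+1}^\lambda\in \cdot\mid \hat F_m^\lambda = Z) = \bbP(\Phi_\lambda(Z,\xi_1)\in\cdot)$.

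For part (ii), my key preliminary observation is that, since $\Xi$ is finite, every iterate lies in the finite-dimensional subspace
\[
V := \mathrm{span}\bigl(\{1\}\cup\{g_j(\cdot,\xi): 1\leq j\leq n,\ \xi\in\Xi\}\bigr)\subset L^\infty.
\]
As $V$ is closed and all norms on it are equivalent to the restriction of $\|\cdot\|_{L^\infty}$, weak convergence in $\mathbb{D}([0,\infty),L^\infty)$ for processes with paths in $V$ is equivalent to weak convergence in $\mathbb{D}([0,\infty),V)$; this reduces~\eqref{eq:conv-functional} to convergence in a finite-dimensional Skorokhod space and avoids genuine Banach-space complications.

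Next I would apply the generator/martingale-problem machinery of \cite{EK86,SV06} to the rescaled chain $Z^\lambda_t := \hat F^\lambda_{[t/\lambda]}$. Its infinitesimal generator, acting on smooth $h:V\to\mathbb{R}$, is
\[
A_\lambda h(Z) = \lambda^{-1}\,\bbE_\xi\!\left[h\bigl(Z + \lambda (G_\xi-\mathcal{L}_\xi Z)\bigr) - h(Z)\right],
\]
with $\mathcal{L}_\xi Z = \sum_i Z(x_i)g_i(\cdot,\xi)$ and $G_\xi = \sum_i Y_i g_i(\cdot,\xi)$. A first-order Taylor expansion, whose remainder is $O(\lambda)$ uniformly on bounded subsets of $V$, yields
\[
A_\lambda h(Z)\ \xrightarrow[\lambda\to 0]{}\ A h(Z) := Dh(Z)\cdot\bigl(G-\mathcal{L}(Z)\bigr),
\]
which is the generator of the deterministic flow driven by the ODE $\dot Z = G - \mathcal{L}(Z)$. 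By Theorem~\ref{thm:EDO} applied with $\bar g_i$ in place of $g_i$, this ODE has a unique solution from each initial condition, so the associated martingale problem on $V$ is well posed.

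To conclude via the Ethier--Kurtz convergence theorem, it remains to establish tightness of $(Z^\lambda)_{\lambda>0}$ in $\mathbb{D}([0,\infty),V)$. The variance estimate of Proposition~\ref{prop:randomized-setting-fdd}(ii), combined with a Gronwall argument bounding $\sup_{t\leq T}\bbE_\xi\|Z^\lambda_t\|^2$ on compact time intervals, yields uniform moment control, while the $O(\lambda)$ size of the increments provides an Aldous-type estimate of the modulus of continuity. Any weak limit point then solves the martingale problem for $A$ and, by well-posedness, coincides with the deterministic flow $(\hat F_t)_{t\geq 0}$ of Corollary~\ref{cor:vanishing-limit-stochastic}; continuity of this limit upgrades finite-dimensional convergence plus tightness to Skorokhod convergence~\eqref{eq:conv-functional}. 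The main obstacle I anticipate is the careful verification of the Ethier--Kurtz technical hypotheses (selecting a core of test functions, checking the convergence of generators and the Aldous tightness criterion with the explicit constants coming from Proposition~\ref{prop:randomized-setting-fdd}); everything remains tractable precisely because the chain is confined to the finite-dimensional subspace $V$ and the limiting semigroup is that of a bounded linear ODE.
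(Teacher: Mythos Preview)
Your proposal is correct and follows essentially the same route as the paper: the same random-map representation for part~(i), the same reduction to the finite-dimensional subspace $V=\mathcal{F}$, and the same Stroock--Varadhan/Ethier--Kurtz martingale-problem machinery with limiting drift $b(f)=\sum_i(Y_i-f(x_i))\bar g_i$ and vanishing diffusion, identified with the ODE of Theorem~\ref{thm:EDO}. The only technical difference is that the paper invokes \cite[Theorem~11.2.3]{SV06} directly (which packages tightness together with the drift/volatility convergence and yields convergence of the linearly interpolated process in $\mathbb{C}([0,\infty),\mathcal{F})$, then transfers to $\mathbb{D}$ by a discretization argument), whereas you plan to establish Aldous tightness separately using Proposition~\ref{prop:randomized-setting-fdd}(ii); both executions are valid and lead to the same conclusion.
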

 Our proof relies on the theory of approximations of Markov chains by diffusions, see e.g. \cite{SV06}. The assumption that $\Xi$ is finite is important because it ensures that  $(\hat F_m^\lambda)_{m\geq 0}$ remains in the finite dimensional subspace 
\[
\mathcal{F}=\mathrm{span}(1,g_i(\cdot,\xi); 1\leq i\leq n, \xi\in\Xi),
\]
where $\mathrm{span}$ denotes the linear span of vectors in $L^\infty$. In Theorem~\ref{thm:Markov-functional}, the Markov property and the functional convergence can equivalently be stated with $L^\infty$ replaced by the finite-dimensional space $\mathcal{F}$. This property is crucial in order to use the theory of multidimensional diffusion by \cite{SV06}. 

The limit process $(\hat F_{t})_{t\geq 0}$ is not only c\`ad-l\`ag but continuous  with respect to time as  it is the solution of the linear differential Equation~\eqref{eq:EDO}, where the functions $(g_i)_{1\leq i\leq n}$ need to be replaced by $(\bar g_i)_{1\leq i\leq n}$ in the definition of $\mathcal{L}$. The limit process is even smooth in time as shown by the explicit solution given in Theorem~\ref{thm:EDO}.

\section{Numerical illustration}\label{sec:illustration}
We study numerically the behavior of the vanishing learning rate limit $\hat{F}_t(\cdot)$ given in Equation~\eqref{eq:Ft}. We consider the experimental design from \cite{ZY05} Section 6.1: the  sample  $(Y_i,X_i)_{1\leq i\leq n}$ is generated according to 
\begin{equation}\label{sim-model}
\left\{ \begin{array}{l}
X_i\sim \mathrm{Unif}([-1,1]),\\
\varepsilon_i\sim \mathcal{N}(0,1/4),\\
Y_i=f(X_i)+\varepsilon_i,
\end{array}\right.
\end{equation} 
with regression function
\[
f(x)=1-\Big|2|x|-1\Big|,\quad x\in [-1,1].
\]
The covariates $(X_i)_{1\leq i\leq n}$ and the errors $(\varepsilon_i)_{1\leq i\leq n}$ are assumed i.i.d. and independent on each other. For the boosting procedure, we use a cubic smoothing spline   with 5 degrees of freedom as linear base learners, see Example \ref{ex:smoothing-spline} with $r=2$. Recall that the degrees of freedom, noted $\mathrm{df}$, is equal to the trace of the matrix $S$ defined in Equation \eqref{weights-vec} and reflects the complexity of the base learner.

 We first simulate $n=100$ observations of model \eqref{sim-model} and compute the limit functions $\hat{F}_t(\cdot)$ for different time values $t=0$, $1$, $10$, $100$ and $1\,000$. We can see that $t=10$ produces a fairly good fit while $t=0$ or $1$ produces an underfit and $t=100$ or $1000$ an overfit. Such large values of $t$ are rarely used in practice and this shows that overfitting eventually arises, but very slowly. 

\begin{figure}[H]
\begin{center}\includegraphics[width=12cm,height=9cm]{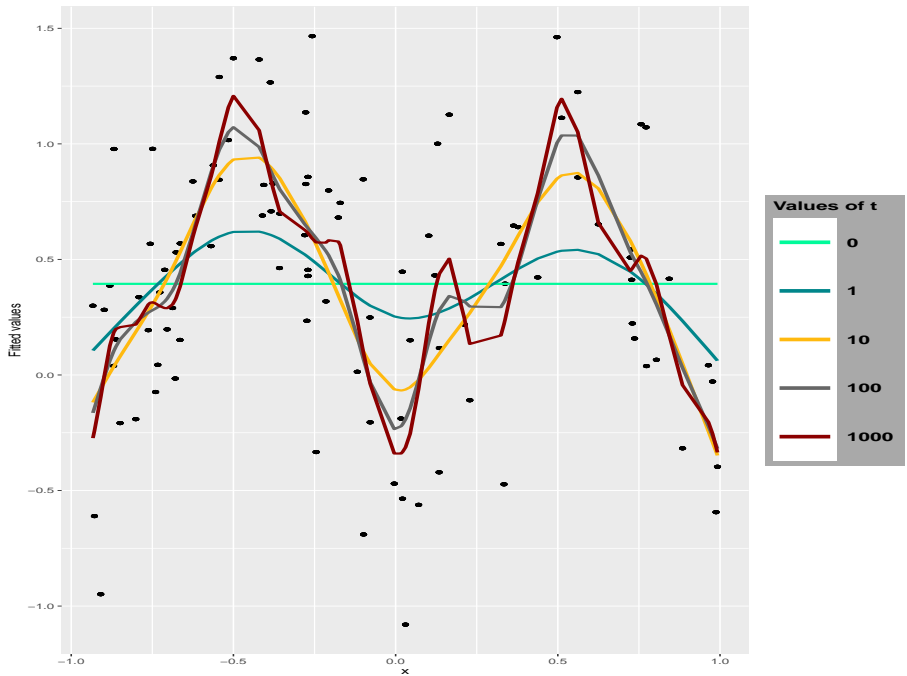}
\caption{Output of the L$^2$-Boosting algorithm in the vanishing learning rate asymptotic with input $(Y_i,X_i)_{1\leq i\leq 100}$ generated from model \eqref{sim-model} at different values of $t$. The black dots represent the observed data set $(Y_i,X_i)_{1\leq i\leq 100}$.}
\label{fig1}
\end{center}
\end{figure}

To analyse this overfit and the effect of the learning rate $\lambda$, we then compare the training and test errors as defined in Section \ref{sec:tr-te}. In Figure \ref{fig2} below, we plot the training and test errors of the boosting predictor $\hat{F}^\lambda_{[t/\lambda]}(\cdot)$ as a function of time $t\geq 0$ (in logarithmic scale), for different learning rates $\lambda=1$, $0.5$, $0.1$ and for the vanishing learning rate limit $\lambda\to 0$. We can see that the training error is decreasing while the test error decreases until a minimum at $t\approx \exp(1.8)\approx 6$ and starts to increase again. As $\lambda\to 0$,  the error functions converge quickly to their limit and $\lambda=0.1$ can hardly be distinguished from the  limit.  Furthermore, convergence is slower for small time values and uniformly fast for large time values. Interestingly, the test error is minimal for vanishing learning rate  $\lambda\to 0$, especially for small time values, supporting the idea that reducing the learning rate reduces the test error.

\begin{figure}[H]
\begin{center}\includegraphics[width=13cm,height=9cm]{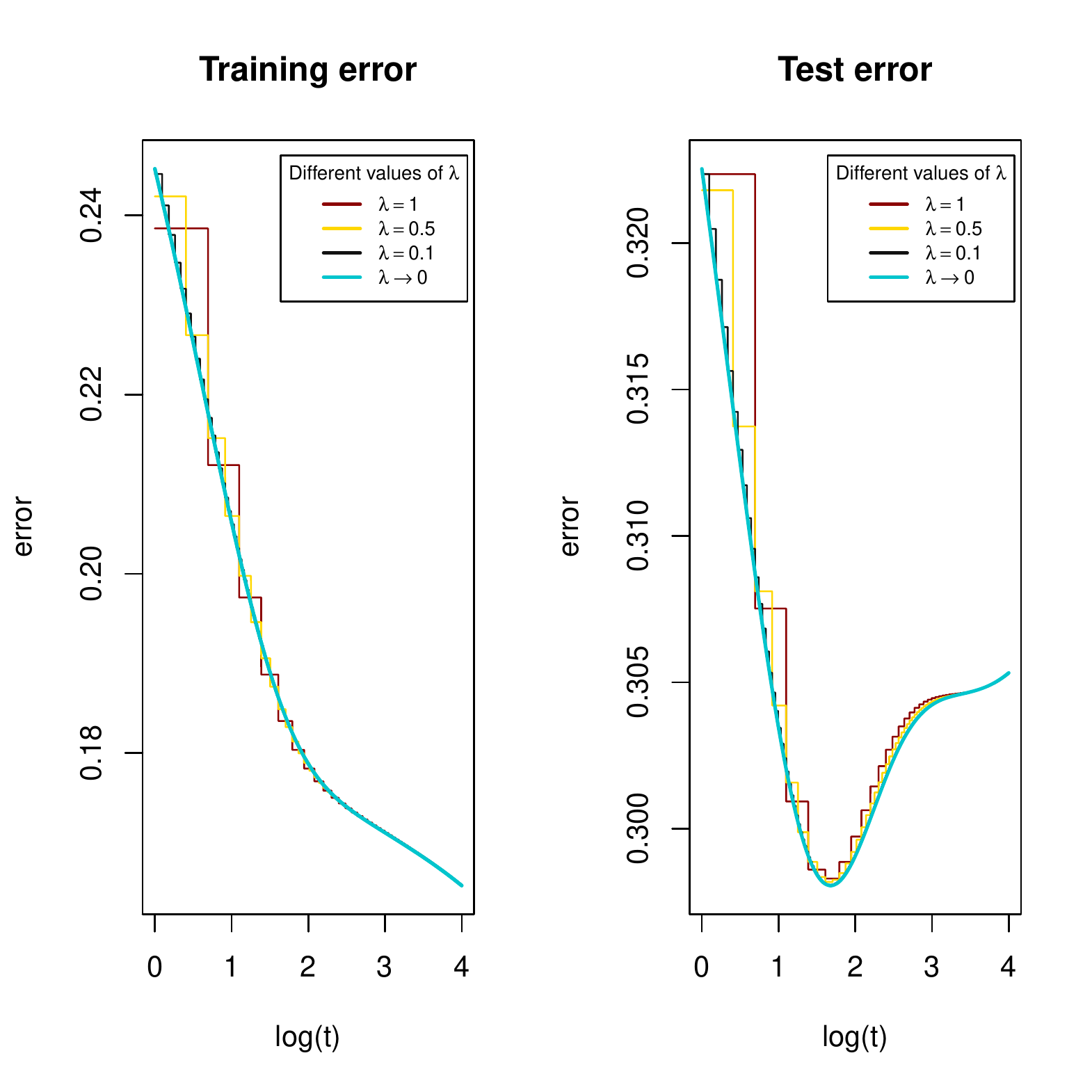}
\caption{Training and test error of the predictors $\hat{F}_t(\cdot)$ and $\hat{F}^\lambda_{[t/\lambda]}(\cdot)$ for different values of $\lambda$.}
\label{fig2}
\end{center}
\end{figure}

Importantly,  training error decreases slowly than expected, in contrast with the exponential rate of convergence announced by the theory. The convergence to $0$ of the training error cannot be observed on Figure~\ref{fig2} where $t$ ranges from $0$ to $\exp(4)\approx 50$.  This phenomenon is explained by the order of magnitude of the eigenvalues of the base learner. The eigenvalues of  $S$ quickly decrease  to very small values as seen in Figure \ref{fig3} below, where the $60$ largest eigenvalues are plotted in logarithmic scale (some numerical instability arises for smaller eigenvalues). The rate of convergence to $0$ of the training error is exponential with rate  $e^{-t\mu_{100}}$, where $\mu_{100}$ denotes the smallest eigenvalues. Here we already have $\mu_{60}\approx 6\cdot 10^{-7}$ explaining the slow rate of decrease of the training error and the fact tjat convergence to zero  is not  observed in practice on usual time range.

\begin{figure}[H]
\begin{center}\includegraphics[width=10cm,height=8cm]{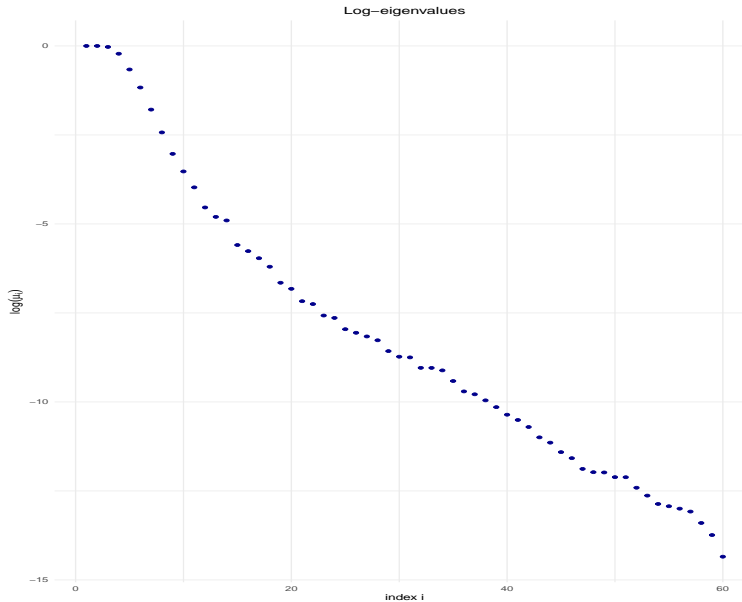}
\caption{Decay of the base learner eigenvalues in logarithmic scale - only the $60$ largest eigenvalues are plotted due to numerical instability for smaller ones.}
\label{fig3}
\end{center}
\end{figure}  

We next discuss what this behaviour of eigenvalues imply for the boosting predictor $\hat F_t(\cdot)$. When considering prediction at $\mathbf{x}=(x_i)_{1\leq i\leq n}$, we can write $\hat F_t(\mathbf{x})=(\hat F_t(x_i))_{1\leq i\leq n}$  as
\begin{equation}\label{eq:smoothed-projection}
\hat F_t(\mathbf{x})=u_1u_1^TY+\sum_{i=2}^n (1-e^{-\mu_i t})u_iu_i^TY
\end{equation}
where  $(\mu_i)_{1\leq i\leq n}$ and  $(u_i)_{1\leq i \leq n}$ are the eigenvalues and eigenvectors of $S$ and $Y=(Y_i)_{1\leq i\leq n}$. The rank one matrix $u_iu_i^T$ is the matrix of the orthogonal projection on the eigenspace $\mathbb{R}u_i$.  We interpret Equation~\eqref{eq:smoothed-projection} as a smoothed projection: for $\mu_it\gg 1$, $1-e^{-\mu_i t}\approx 1$ and the boosting operator acts as the projection on this dimension; on the opposite, for $\mu_it\ll 1$, $1-e^{-\mu_i t}\approx 0$ and the boosting operator acts as a filter on this dimension. The fact that the eigenvalues are spread out on multiple orders of magnitude implies that most of the coefficients are close to $0$ or $1$, whence the name smoothed projection. This is illustrated on Figure~\ref{fig4} where the coefficients from Equation~\eqref{eq:smoothed-projection} are plotted for various values of $t$. The sum of the coefficient is equal to the trace of the linear boosting operator, that is  its number of degrees of freedom
\[
\mathrm{df}(t)=1+\sum_{i=2}^n (1-e^{-\mu_i t}),
\]
which is an increasing function of time. As we can see, boosting acts as a smoothed projection on the linear spaced spanned by the first eigenvectors. The larger the time, the larger the number of degrees of freedom of the smoothed projection, that is the larger the projection space.

\begin{figure}[H]
\begin{center}\includegraphics[width=13cm,height=10cm]{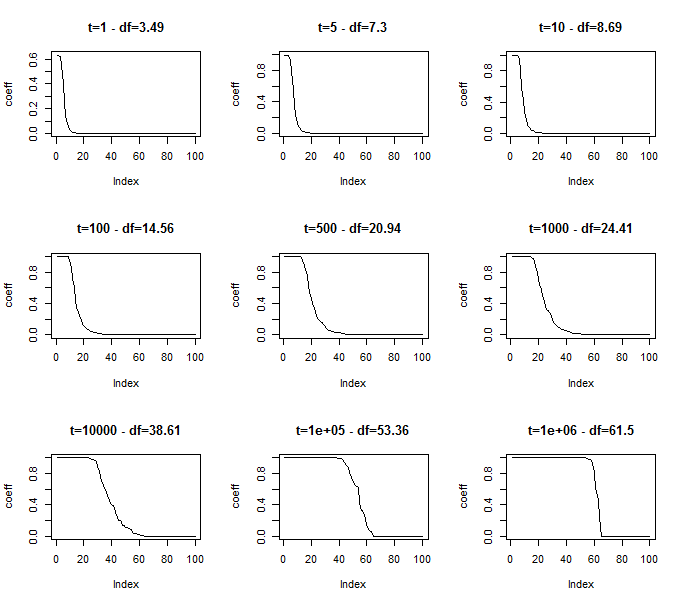}
\caption{Coefficient appearing in Equation~\eqref{eq:smoothed-projection},  the interpretation of the boosting linear operator as a smoothed projection.}
\label{fig4}
\end{center}
\end{figure} 

We finally investigate the behaviour of the eigenvectors of  base learner. Figures~\ref{fig5} and~\ref{fig6} respectively show  the first and last eigenvectors. Quite strikingly, we can see that the first eigenvectors contains a well structured signal (akin to a polynomial basis) while the last eigenvectors mostly contains noise. The interpretation of linear boosting as a smoothed projection is thus meaningful as projection is performed on small dimensions containing signal, while higher dimensions associated to noise are filtered out.

\begin{figure}[H]
\begin{center}\includegraphics[width=10cm,height=8.5cm]{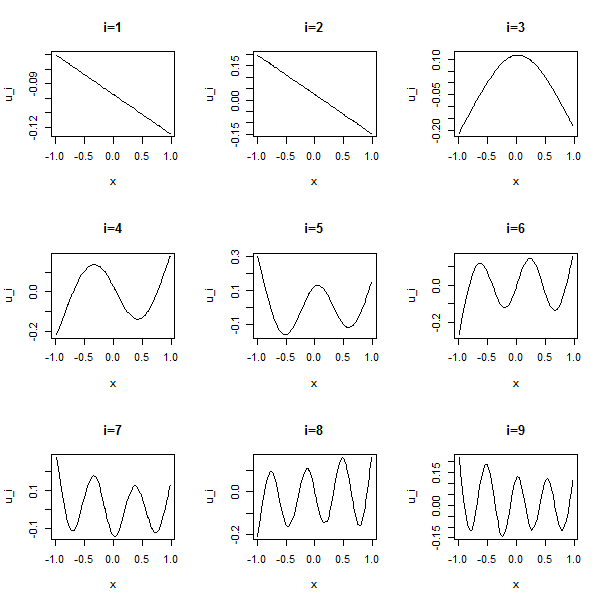}
\caption{First eigenvectors of the linear base learner interpreted as signal.}
\label{fig5}
\end{center}
\end{figure}

\begin{figure}[H]
\begin{center}\includegraphics[width=10cm,height=8.5cm]{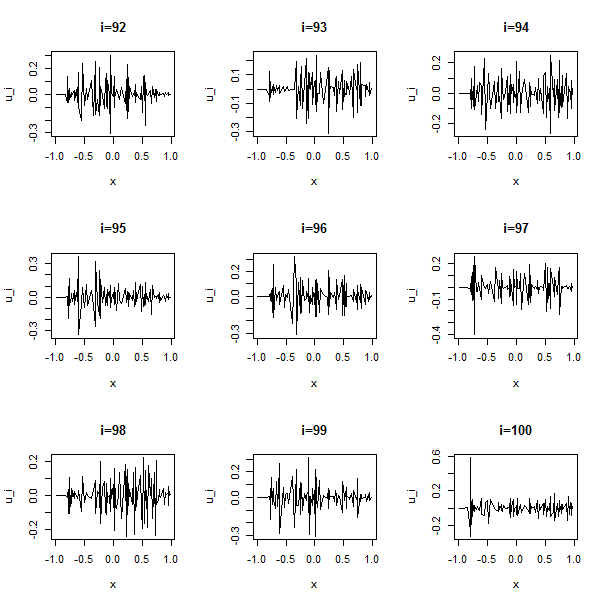}
\caption{Last eigenvectors of the linear base learner interpreted as noise.}
\label{fig6}
\end{center}
\end{figure}

\section{Proofs}\label{sec:proofs}
\subsection{Proofs for Section~\ref{sec:L2boosting-BY}}

\begin{proof}[Proof of Proposition~\ref{prop:weight-formula}]
A straightforward induction based on the recursive Equation~\eqref{weights-vec} yields the first formula for $w_{m}^\lambda$. Furthermore, the identity
\[
(I-\lambda S)\sum_{j=0}^{m-1} (I-\lambda S)^j=\sum_{j=0}^{m} (I-\lambda S)^j -I
\]
implies
\[
\lambda S\sum_{j=0}^{m-1} (I-\lambda S)^j=I-(I-\lambda S)^m.
\]
When $S$ is invertible, we deduce
\[
\lambda \sum_{j=0}^{m-1} (I-\lambda S)^j=S^{-1}[I-(I-\lambda S)^m]
\]
and the second formula for $w_m^\lambda$ follows.
\end{proof}

\begin{proof}[Proof of Proposition~\ref{prop:vanishing-learning-rate}]
Using the first formula for $w_m^\lambda$ from Proposition~\ref{prop:weight-formula} and the binomial theorem, we get
\[
w_{m}^\lambda=\lambda\sum_{j=0}^{m-1}\sum_{k=0}^j\binom{j}{k}\left(-\lambda S\right)^k\tilde Y=\lambda\sum_{k=0}^{m-1}\left(-\lambda S\right)^k\sum_{j=k}^{m-1}\binom{j}{k}\tilde Y.
\]
By the Hockey-stick identity, 
\begin{equation}\label{eq:weight-formula}
w_{m}^\lambda=\lambda\sum_{k=0}^{m-1}\binom{m}{k+1}\left(-\lambda S\right)^k\tilde Y=-\sum_{j=1}^{m}\binom{m}{j}(-\lambda)^j S^{j-1}\tilde Y.
\end{equation}
Let $T>0$. In view of \eqref{eq:linear-formula} and \eqref{eq:Ft}, 
\[
\sup_{t\in[0,T]}\sup_{x\in [0,1]^p}\left|\hat F_{\left[t/\lambda\right]}^\lambda(x)-\hat F_t(x)\right|
\leq \sup_{t\in[0,T]}\sum_{i=1}^n\left|w_{\left[t/\lambda\right],i}^\lambda-w_{t,i}\right|\sup_{x\in [0,1]^p}\left|g_i(x)\right|.
\]
The functions $g_1,\ldots,g_n$ are locally bounded so that  
\begin{equation}\label{eq:conv-unif-F}
\sup_{t\in[0,T]}\sup_{x\in [0,1]^p}\left|\hat F_{\left[t/\lambda\right]}^\lambda(x)-\hat F_t(x)\right|\leq  M \sup_{t\in[0,T]}\left\|w_{\left[t/\lambda\right]}^\lambda-w_{t}\right\|
\end{equation}
for some $M>0$. Here, $\|\cdot\|$ denotes any norm on $\mathbb{R}^n$ and we use below the same notation for the induced operator norm of $n\times n$ matrix.
We need to prove that 
\begin{equation}\label{eq:conv-w}
\|w_{[t/\lambda]}^\lambda- w_t\|\to 0 \quad \mbox{  uniformly  for $t\in [0,T]$.}
\end{equation}
Equation~\eqref{eq:weight-formula} implies
\[
w_{\left[t/\lambda\right]}^\lambda-w_{t}= \sum_{j=1}^{\left[t/\lambda\right]}\left(\frac{(-t)^j}{j!}-\binom{\left[t/\lambda\right]}{j}(-\lambda)^j\right)S^{j-1}\tilde Y+\sum_{j>\left[t/\lambda\right]}\frac{(-t)^j}{j!}S^{j-1} \tilde Y,
\]
whence we deduce $\|w_{\left[t/\lambda\right]}^\lambda-w_{t}\|\leq \mathrm{I}+\mathrm{II}$
with 
\begin{align*}
\mathrm{I}&=\sum_{j=1}^{\left[t/\lambda\right]} \left|\frac{t^j}{j!}-\binom{\left[t/\lambda\right]}{j}\lambda^j\right|\left\|S\right\|^{j-1}\|\tilde{Y}\|,\\
\mathrm{II}&=\sum_{j>\left[t/\lambda\right]}\frac{t^j}{j!}\|S\|^{j-1} \|\tilde Y\|.
\end{align*}
Consider the first term. For $t\in [0,T]$, $\lambda>0$ and  $1\leq j \leq [t/\lambda]$, 
\[
\binom{\left[t/\lambda\right]}{j}\lambda^j\geq\frac{1}{j!}\left(\lambda\left[t/\lambda\right]-\lambda (j-1)\right)^j\geq \frac{1}{j!}(t-\lambda j)^j,
\]
which entails
\[
\left|\frac{t^j}{j!}-\binom{\left[t/\lambda\right]}{j}\lambda^j\right|\leq \frac{1}{j!}(t^j -(t-\lambda j)^j)\leq \frac{1}{j!}(T^j -(T-\lambda j))^j
\]
where the last inequality uses the fact that $t\mapsto t^j -(t-\lambda j)^j$ is increasing on $[\lambda j,+\infty)$. We deduce 
\[
\mathrm{I}\leq \|\tilde{Y}\|\sum_{j=1}^{\left[t/\lambda\right]} \frac{1}{j!}( T^j-(T-\lambda j)^j)\|S\|^{j-1}.
\]
Thanks to the inequality
\[
T^j-(T-\lambda j)^j= \lambda j\sum_{k=0}^{j-1}T^{j-k-1}(T-\lambda j)^k \leq \lambda j^2 T^{j-1},
\]
we get the upper bound
\[
\mathrm{I}\leq \lambda \|\tilde{Y}\| \sum_{j=1}^{\infty} \frac{1}{j!}j^2(T \|S\|)^{j-1}.
\]
Since the last series converges and $t\in [0,T]$ is arbitrary, we deduce $\mathrm{I}=O(\lambda)$ as $\lambda\to 0$,  uniformly for $t\in [0,T]$.

To analyze the second term, we distinguish between the case $t<\varepsilon$ and $t\geq \varepsilon$. 
For $t\in (0,\varepsilon]$, 
\[
\mathrm{II}\leq \|\tilde Y\| \sum_{j\geq 1}\frac{\varepsilon ^{j}\| S\|^{j-1}}{j!}
\]
and this can be made arbitrary small if we choose $\varepsilon$ small enough (independently of $\lambda$. On the other hand, for $t\in [\varepsilon, T]$,
\[
\mathrm{II}\leq \|\tilde Y\| \sum_{j\geq\left[\delta/\lambda\right]}\frac{T^{j}\| S\|^{j-1}}{j!}
\]
and the right hand side converges to $0$ as $\lambda\to 0$, because it is the remainder of a convergent series. We deduce that $\mathrm{II}\to 0$ as $\lambda\to 0$ uniformly in $t\in [0,T]$.
This proves Equation~\eqref{eq:conv-w} and, 
in view of Equation~\eqref{eq:conv-unif-F},
 the  convergence $\hat F_{[t/\lambda]}^\lambda (x)\to\hat F_t(x)$ uniformly on $[0,T]\times [0,1]^p$.

Finally, in the case when $S$ is invertible, we have
\begin{align*}
w_t&=-\sum_{j\geq 1}\frac{(-t)^j}{j!}S^{j-1} \tilde Y=-S^{-1}\sum_{j\geq 1}\frac{(-t)^j}{j!}S^{j}\tilde Y\\
&=-S^{-1}(e^{-tS}-I) \tilde Y.
\end{align*}
This proves Equation~\eqref{eq:wt-inv}.
\end{proof}

\begin{proof}[Proof of Corollary~\ref{cor:symmetric}]
When $S$ is symmetric, it can be written as the sum of rank $1$ orthogonal projections
\[
S=\sum_{j=1}^n \mu_j u_j u_j^{\mathrm{T}}.
\]
Then we have
\begin{align*}
\sum_{k\geq 1} \frac{(-t)^k}{k!}S^{k-1}&=\sum_{j=1}^n \left(\sum_{k\geq 1} \frac{(-t)^k}{k!}\mu_j^{k-1}\right)u_j u_j^{\mathrm{T}}\\
&= \sum_{j=1}^n \frac{e^{-\mu_j t}-1}{\mu_j} u_j u_j^{\mathrm{T}}
\end{align*}
where the series are normally convergent and extension by continuity is used in the last equality when $\mu_j=0$. We deduce from Proposition~\ref{prop:vanishing-learning-rate} that
\[
w_t=\sum_{j=1}^n \frac{1-e^{-\mu_j t}}{\mu_j}u_j u_j^{\mathrm{T}}\tilde Y
\]
and Equation~\eqref{F-exp-limit} follows from Equation~\eqref{eq:Ft} since $w_{t,i}=v_i^{\mathrm{T}}w_t$, $1\leq i\leq n$.
\end{proof}

\begin{proof}[Proof of Theorem~\ref{thm:EDO}]

We first check the claim that $\mathcal{L}$ is a bounded linear operator on $L^\infty$. We denote by $\|\cdot\|_\infty$ the norm in $L^\infty$. For all $x\in [0,1]^p$,
\[
|\mathcal{L}(Z)(x)|\leq \sum_{i=1}^n |Z(x_i)| |g_i(x)| \leq   \Big\|\sum_{i=1}^n |g_i|\Big\|_\infty \| Z\|_\infty,
\]
whence we deduce, taking the supremum over $x\in [0,1]^p$,
\[
\|\mathcal{L}(Z)\|_{\infty}\leq   \Big\|\sum_{i=1}^n |g_i|\Big\|_\infty \| Z\|_\infty.
\]
This proves that the linear operator $\mathcal{L}$ is bounded.

\smallskip
\noindent \textit{Point $i)$}
In the Banach space $L^\infty$, the differential equation~\eqref{eq:EDO} is linear of first order with constant bounded linear operator $\mathcal{L}$ and hence it follows from the general theory (see \cite{Tom1969}) that it admits a unique solution starting from any point $Z_0$. 
We check that  $Z(t)$ defined by Equation~\eqref{eq:EDO-explicit} is this solution. For $t=0$, $e^{-t\mathcal{L}}=\mathrm{Id}$ so that Equation~\eqref{eq:EDO-explicit} yields $Z(0)=Z_0$. On the other hand, differentiating Equation~\eqref{eq:EDO-explicit} thanks to the relations $(e^{-t\mathcal{L}})'=-\mathcal{L}e^{-t\mathcal{L}}$, we obtain 
\begin{align*}
Z'(t)&=-\mathcal{L}e^{-t\mathcal{L}}Z(0)+ \mathcal{L}e^{-t\mathcal{L}}\mathcal{Y}\\
&=-\mathcal{L}(Z(t) - (\mathrm{Id}-e^{-t\mathcal{L}})\mathcal{Y})+  \mathcal{L}e^{-t\mathcal{L}}\mathcal{Y}\\
&=-\mathcal{L}Z(t)+\mathcal{L}\mathcal{Y}\\
&=-\mathcal{L}Z(t)+G.
\end{align*}
In the last equality, we use the fact that $\mathcal{Y}$ is such that $\mathcal{L}(\mathcal{Y})=G$. This proves that $Z(t)$ is the solution of \eqref{eq:EDO} with initial condition $Z(0)=Z_0$.

\smallskip
\noindent \textit{Point $ii)$}
We finally check that $(\hat F_t)_{t\geq 0}$ is the solution of \eqref{eq:EDO} with initial condition $\bar Y_n$. By construction, we have $\hat F_0=\bar Y_n$. Furthermore, differentiating the relation
\[
\hat F_t=\bar Y_n+\sum_{i=1}^n w_{t,i} g_i
\]
yields
\[
\hat F_t'=\sum_{i=1}^n w_{t,i}' g_i
\]
where $w_{t,i}'$ is the $i$-th component of
\[
w_t'=\sum_{j\geq 1} \frac{(-t)^{j-1}}{(j-1)!}S^{j-1}\tilde Y=e^{-tS}\tilde Y.
\]
The derivative of $w_t$ is obtained by differenting  the power series defining $w_t$ and we can see that $w_t$ satisfies the differential equation
\[
w_t'=-S w_t+\tilde Y,\quad t\geq 0.
\]
As a consequence, for $t\geq 0$, 
\begin{align*}
\hat F_t'&=\sum_{i=1}^n \Big( -\sum_{j=1}^n S_{i,j}w_{t,j}+\tilde Y_i\Big) g_i\\
&= -\sum_{i=1}^n \Big(\bar Y_n  +\sum_{j=1}^n w_{t,j}g_j(x_i)\Big)g_i+  \sum_{i=1}^n Y_ig_i\\
&=-\sum_{i=1}^n \hat F_t(x_i)g_i+  \sum_{i=1}^n Y_ig_i\\
&= -\mathcal{L}(\hat F_t)+G. 
\end{align*}
This proves that $(\hat F_t)_{t\geq 0}$ is the unique solution of Equation~\eqref{eq:EDO} starting from $\bar Y_n$ and its explicit form follows from point $ii)$.
\end{proof}

\begin{proof}[Proof of Proposition~\ref{prop:stability}] 
Using the linear independence of $g_1,\ldots,g_n$ together with Equation~\eqref{eq:Ft}, we can see that the output $\hat F_t$ remains bounded in $L^\infty$ as $t\to\infty$ if and only if the weight $w_t$ remains bounded in $\mathbb{R}^n$ as $t\to\infty$. Using the explicit formula~\eqref{eq:wt} for $w_t$ and the Jordan decomposition of $S$, we prove that $(w_t)_{t\geq 0}$ remains bounded if and only if, for all Jordan block $B$ of $S$,
\begin{equation}\label{eq:series}
\sum_{k\geq 1}\frac{(-t)^k}{k!}B^{k-1} \quad \mbox{remains bounded as $t\to\infty$}.
\end{equation}
Indeed,  the assumption  $\sum_{i=1}^n g_i(x)=1$ implies that $1$ is an eigenvalue of $S$ associated to the constant eigenvector $1_n$. It follows that the centered input $\tilde Y$ in the definition of \eqref{eq:wt} can provide a contribution related to any other Jordan block.

Finally, we characterize the property \eqref{eq:series}. Write the Jordan block $B$ of size $s$  in the form $B=\mu I_s+N$ where $\mu$ is an eigenvalue of $S$ and $N$ a nilpotent matrix of order $s-1$. A standard discussion, akin to the criterion for the stability of linear systems of differential equations (see \cite{bellman1969}), reveals that  \eqref{eq:series} holds if and only if $\mu$ has positive real part or if $s=1$ and $\mu$ has a null real part.
\end{proof}

\begin{proof}[Proof of Proposition~\ref{prop:train-error}] 
\noindent \textit{Point $i)$.} The convergence 
\[
\mathrm{err}_{train}(t)=\frac{1}{n}\|e^{-tS}\tilde Y\|\longrightarrow 0 ,\quad \mbox{as $t\to\infty$},
\]
for all possible input is equivalent to the matrix convergence 
\[
e^{-tS}\longrightarrow 0, \quad \mbox{as $t\to\infty$}.
\]
Note indeed that the centered input $\tilde Y$ belongs to the space orthogonal to $1_n$ but the direction $1_n$ is well-controlled since Assumption~\ref{ass:cst} implies $S1_n=1_n$ and hence $e^{-tS}1_n=e^{-t}1_n\to 0$. Finally, the convergence $e^{-tS} \longrightarrow 0$ is equivalent to the fact that all the (complex) eigenvalues of $S$ have a positive real part (see for example \cite{bellman1997} and the references therein).

\medskip \noindent \textit{Point $ii)$.}
The relation 
\begin{align}
\mathbb{E}[\mathrm{err}_{train}(t)]&=\frac{1}{n}\mathbb{E}[\|[R_t]\|^2]\nonumber \\
&= \frac{1}{n}\|\mathbb{E}[R_t]\|^2+\frac{1}{n}\mathrm{Trace}(\mathrm{Var}(R_t)).\label{eq:biais-variance}
\end{align}
yields the decomposition into squared bias and variance.  The vector of residuals is $R_t=e^{-t S}\tilde Y$ where  $\tilde Y=(Y_i-\bar Y_n)_{1\leq i\leq n}$ has expectation and variance 
\begin{align*}
\mathbb{E}[\tilde Y]&=f-\bar f 1_n=\tilde f,\\
\mathrm{Var}[\tilde Y]&=\sigma^2\left(I-\frac{1}{n}1_n1_n^T\right)=\sigma^2 J.
\end{align*}
We deduce the squared bias and variance
\begin{align*}
\mathrm{bias}^2(t)&=\frac{1}{n}\big\|\mathbb{E}[R_t]\big\|^2=\frac{1}{n}\big\|e^{-tS}\tilde f\big\|^2,\\
\mathrm{var}_{train}(t)&=\frac{1}{n}\mathrm{Trace}(\mathrm{Var}[R_t])=\frac{\sigma^2}{n}\mathrm{Trace}\big(e^{-tS} J e^{-tS^T}\big)
\end{align*}
and Equation~\eqref{eq:expected-training-error} follows from Equation~\eqref{eq:biais-variance}.

\medskip \noindent 
\textit{Point $iii)$.} When $S$ is symmetric, we use the  decomposition
$S=\sum_{i=1}^n \mu_i u_iu_i^T$ which implies 
\[
e^{-tS}\tilde f=\sum_{i=1}^n e^{-t\mu_i}(u_i^T \tilde f)u_i
\]
and 
\[
\mathrm{bias}^2(t)=\frac{1}{n}\|e^{-tS}\tilde f\|^2=\frac{1}{n}\sum_{i=1}^n e^{-2t\mu_i}(u_i^T \tilde f)^2.
\]
On the other hand, using furthermore the relations $S^T=S$, $\mathrm{Trace}(AB)=\mathrm{Trace}(BA)$ and $J^2=J$, we get
\begin{align*}
\mathrm{var}_{train}(t)&=\frac{\sigma^2}{n}\mathrm{Trace}\Big( e^{-tS}Je^{-tS^T}\Big)=\frac{\sigma^2}{n}\mathrm{Trace}\Big( J e^{-2tS}J^T\Big)\\
&=\frac{\sigma^2}{n}\sum_{i=1}^n e^{-2t\mu_i}\mathrm{Trace}\Big(Ju_iu_i^T J^T\Big)
= \frac{\sigma^2}{n}\sum_{i=1}^n e^{-2t\mu_i}\|Ju_i\|^2.
\end{align*}
\end{proof}

\begin{proof}[Proof of Proposition~\ref{prop:test-error-2}]
\textit{Point $i)$.} The proof is similar to the proof of point $ii)$ in Proposition~\ref{prop:train-error} and we give the main lines only.  Equation~\eqref{eq:err-test-2} is equivalent to
\[
\mathrm{err}_{test}(t)=\frac{1}{n}\big\| R_t'\|^2\quad \mbox{with }R_t'=(Y_i'-\hat F_t(x_i))_{1\leq i\leq n},
\]
so that $\mathbb{E}[\mathrm{err}_{test}(t)]=\mathrm{bias}^2(t)+\mathrm{var}_{test}(t)$ with
\begin{align*}
\mathrm{bias}^2(t)&=\frac{1}{n}\big\|\mathbb{E}[R_t']\|^2,\\
\mathrm{var}_{test}(t)&=\frac{1}{n}\mathrm{Trace}(\mathrm{Var}(R_t')).
\end{align*}
Since $\mathbb{E}[R_t']=\mathbb{E}[R_t]=e^{-tS}\tilde f$, the squared bias is the same as in Proposition~\ref{prop:train-error}. Finally, the formula for the variance follows from the relation
\[
R_t'=Y'+\bar{Y}_n1_n +(e^{-tS}-I)\tilde Y
\]
where $Y'=(Y'_i)_{1\leq i\leq n}$, $\bar{Y}_n1_n$ and $(e^{-tS}-I)\tilde Y$ are uncorrelated with variance $\sigma^2 I$, $\sigma^2n^{-1}1_n1_n^T$ and $\sigma^2(I-e^{-tS})J(I-e^{-tS})^T$ respectively.

\medskip\noindent\textit{Point $ii)$.} The formula are proved in the same way as the formulas of point $iii)$ in Proposition~\ref{prop:train-error} and we omit the proof for the sake of brevity. The claimed properties of the squared bias and variance are straightforward. To prove the monotonicity of the expected test error near the origin and at infinity, it is enough to compute the derivative and prove that it is negative near $0$ and positive near infinity. The limit $2\sigma^2$  relies on the fact that $\sum_{i=1}^n \|Ju_i\|^2=n-1$ because $J$ is an orthogonal projection of rank $n-1$ and $(u_i)_{1\leq i\leq n}$ and orthonormal basis. Details are left to the reader.
\end{proof}

\begin{proof}[Proof of Proposition~\ref{prop:test-error}]
Conditionally on $X'=x'$, we have the decomposition
\begin{align*}
\mathbb{E}[(Y'-\hat F_t(X'))^2\mid X'=x']&=\sigma^2+\mathbb{E}[(f(x')-\hat F_t(x'))^2]\\
&= \sigma^2+(f(x')-\mathbb{E}[F_t(x')])^2+\mathrm{Var}[F_t(x')].
\end{align*}
By Proposition~\ref{prop:vanishing-learning-rate},  \[
\hat F_t(x')=\bar Y_n+g(x')^T S^{-1}\left(I_n-\mathrm{e}^{-tS}\right)\tilde Y,
\]
with expectation and variance given by
\begin{align*}
\mathbb{E}[\hat F_t(x')]&= \bar f+ g(x')^T S^{-1}\left(I_n-\mathrm{e}^{-tS}\right)\tilde f\\
\mathrm{Var}[\hat F_t(x')]&= \frac{\sigma^2}{n}+g(x')^T S^{-1}\left(I_n-\mathrm{e}^{-tS}\right)J_n\left(I_n-\mathrm{e}^{-tS}\right)S^{-1}g(x').
\end{align*}
We deduce
\begin{align*}
&\mathbb{E}[(Y'-\hat F_t(X'))^2\mid X'=x']=\sigma^2+(f(x')-\bar f- \tilde f^T S^{-1}\left(I_n-\mathrm{e}^{-tS}\right)g(x'))^2 \\
&\quad\quad +\frac{\sigma^2}{n}+g(x')^T S^{-1}\left(I_n-\mathrm{e}^{-tS}\right)J_n\left(I_n-\mathrm{e}^{-tS}\right)S^{-1}g(x').
\end{align*}
Integrating with respect to $x'$, we obtain the announced value of the test error.
\end{proof}

\subsection{Proofs for Section~\ref{sec:SGB}}

\begin{proof}[Proof of Proposition~\ref{prop:randomized-setting-fdd}]
As a preliminary, we state a Markov property of the stochastic boosting algorithm. For $x\in[0,1]^p$, we note $\mathbf{x}=(x_i)_{1\leq i\leq n+1}$ with the convention $x_{n+1}=x$ and also $\hat F_m^\lambda(\mathbf{x})=(\hat F_m^\lambda(x_i))_{1\leq i\leq n+1}$. We observe that $(\hat F_m^\lambda(\mathbf{x}))_{m\geq 0}$ is a time homogeneous Markov chain with values in $\mathbb{R}^{n+1}$. 
Indeed, the recursive relation~\eqref{eq:stochastic-boosting-recursion} implies  that $\hat F_{m+1}^\lambda(\mathbf{x})$ depends  only of $(\hat F_{m}^\lambda(x_i))_{1\leq i\leq n}$, and $\xi_{m+1}$. The time homogeneous Markov property follows since $\hat F_{m}^\lambda(\bf x)$ contains $(\hat F_{m}^\lambda(x_i))_{1\leq i\leq n}$ in its first $n$ components  and  $\xi_{m+1}$ is independent on the past $\hat F_{0}^\lambda(\mathbf{x}),\ldots, \hat F_{m}^\lambda(\mathbf{x})$. 

\medskip\noindent
\textit{Point $i)$.} Taking conditional expectation, Equation~\eqref{eq:stochastic-boosting-recursion} implies
\begin{equation}\label{eq:conditional-expectation}
\mathbb{E}_\xi[ \hat F_{m+1}^\lambda(\mathbf{x}) \mid \hat F_{m}^\lambda(\mathbf{x})]=\hat F_m^\lambda(\mathbf{x})+\lambda \sum_{i=1}^n (Y_i-\hat F_m^\lambda(x_i))\bar g_i(\mathbf{x}),\quad  m\geq 0,
\end{equation}
with $\bar g_j(\mathbf{x})=(\bar g_j(x_i))_{1\leq i\leq n+1}$. We deduce 
\[
\mathbb{E}_\xi[ \hat F_{m+1}^\lambda(\mathbf{x}) ]=\mathbb{E}_\xi[ \hat F_{m}^\lambda(\mathbf{x}) ]+\lambda \sum_{i=1}^n (Y_i-\mathbb{E}_\xi[ \hat F_m^\lambda(x_i)])\bar g_i(\mathbf{x}),\quad  m\geq 0.
\]
Considering component $n+1$, we see that the functions $x\mapsto \mathbb{E}_\xi[ \hat F_{m+1}^\lambda(x)]$ satisfy the recursive relation~\eqref{eq:boosting-recursion} where the linear base learner is given by~\eqref{eq:linear-assumption} with $g_j$ replaced by $\bar g_j$. Proposition~\ref{prop:weight-formula} then yields the explicit form for $\mathbb{E}_\xi[ \hat F_{m+1}^\lambda(\mathbf{x})]$ stated in Equation~\eqref{eq:expectation}. 

\medskip\noindent
\textit{Point $ii)$.} In order to compute the variance of $\hat F_m^\lambda(x_j)$, $1\leq j\leq n+1$, we use the recursive relation~\eqref{eq:boosting-recursion} together with the variance decomposition  
\[
\mathrm{Var}_\xi[\hat F_{m+1}^\lambda(x_j)]=\mathrm{Var}_\xi[\mathbb{E}_\xi[\hat F_{m+1}^\lambda(x_j) \mid \hat F_{m}^\lambda(\mathbf{x})]]+\mathbb{E}_\xi[\mathrm{Var}_\xi[\hat F_{m+1}^\lambda(x_j) \mid \hat F_{m}^\lambda(\mathbf{x})]].
\]
In view of Equation~\eqref{eq:conditional-expectation}, the first term satisfies
\begin{align*}
&\mathrm{Var}_\xi[\mathbb{E}_\xi[\hat F_{m+1}^\lambda(x_j) \mid \hat F_{m}^\lambda(\mathbf{x})]]\\
&=\mathrm{Var}_\xi\Big[\hat F_m^\lambda(x_j)+\lambda \sum_{i=1}^n (Y_i- \hat F_m^\lambda(x_i))\bar g_i(x_j) \Big]\\
&=\mathrm{Var}_\xi[\hat F_m^\lambda(x_j)]+\lambda^2\sum_{1\leq i,k\leq n} \bar g_i(x_j)\bar g_k(x_j) \mathrm{Cov}_\xi[\hat F_m^\lambda(x_i),\hat F_m^\lambda(x_k)]\\
&\quad - 2\lambda\sum_{i=1}^n \bar g_i(x_j) \mathrm{Cov}_\xi[\hat F_m^\lambda(x_j),\hat F_m^\lambda(x_i)]  \\
&\leq \left(1+\lambda M_1\right)^2\max_{1\leq i\leq n+1} \mathrm{Var}_\xi[\hat F_{m}^\lambda(x_i)],
\end{align*}
where $M_1$ is given by Equation~\eqref{eq:M_1}.
In the last inequality, we use the Cauchy-Schwartz inequality to upper bound the covariances. We next provide an upper bound for the second term in the variance decomposition. We have
\begin{align*}
&\mathrm{Var}_\xi[\hat F_{m+1}^\lambda(x_j) \mid \hat F_{m}^\lambda(\mathbf{x})]\\
&=\lambda^2 \mathrm{Var}_\xi\Big[\sum_{i=1}^n (Y_i- \hat F_m^\lambda(x_i)) g_i(x_j)\mid \hat F_{m}^\lambda(\mathbf{x}) \Big]\\
&= \lambda^2\sum_{1\leq i,k\leq n}(Y_i-\hat F_m^\lambda(x_i))(Y_k- \hat F_m^\lambda(x_k))\mathrm{Cov}_\xi[g_i(x_j),g_k(x_j)]\\
&\leq \lambda^2 M_2\sum_{i=1}^n (Y_i- \hat F_m^\lambda(x_i))^2,
\end{align*}
where $M_2$ is defined in~\eqref{eq:M_2}.
The last line relies on the inequality $u^T\Sigma u\leq \rho(\Sigma)\|u\|^2$, where $u\in\mathbb{R}^n$, $\Sigma\in\mathbb{R}^{n\times n}$ is a non negative symmetric  matrix and $\rho(\Sigma)$ denotes its spectral radius, i.e. its largest eigenvalue. We apply this inequality with $u=(Y_i-\hat F_m^\lambda(x_i))_{1\leq i\leq n}$ and $\Sigma=(\mathrm{Cov}_\xi[g_i(x_j),g_k(x_j)])$ and we use the fact that $\rho(\Sigma)\leq \mathrm{Trace}(\Sigma) $. We deduce that the second term in the variance decomposition is upper bounded by
\begin{align*}
&\mathbb{E}_\xi[\mathrm{Var}_\xi[\hat F_{m+1}^\lambda(x_j) \mid \hat F_{m}^\lambda(\mathbf{x})]]\\
&\leq \lambda^2 M_2 \sum_{i=1}^n \mathbb{E}_\xi[(Y_i- \hat F_m^\lambda(x_i))^2]\\
&\leq n\lambda^2 M_2 \max_{1\leq i\leq n+1} \mathrm{Var}_\xi[\hat F_{m}^\lambda(x_i)]+\lambda^2 M_2\sum_{i=1}^n(Y_i-\mathbb{E}_\xi[\hat F_m^\lambda(x_i)])^2.
\end{align*}
Collecting the two terms of the variance decomposition, we get
\[
\mathrm{Var}_\xi[\hat F_{m+1}^\lambda(x_j)]\leq (1+\alpha) a_m + \beta b_m
\]
with $\alpha=2\lambda M_1+\lambda^2 M_1^2+n\lambda^2 M_2$, $\beta=\lambda^2 M_2$, $a_m=\max_{1\leq i\leq n+1} \mathrm{Var}_\xi[\hat F_{m}^\lambda(x_i)]$ and $b_m=\sum_{i=1}^n(Y_i-\mathbb{E}_\xi[\hat F_m^\lambda(x_i)])^2$. Taking the maximum over $j=1,\ldots, n+1$, note that the sequence $(a_m)_{m\geq 0}$ satisfies
\[
a_{m+1}\leq (1+\alpha ) a_m + \beta b_m,\quad m\geq 0.
\]
By the discrete Gronwall lemma or a straightforward induction, we deduce
\[
a_m\leq (1+\alpha )^{m+1} a_0+\beta (1+\alpha)^m\sum_{k=0}^{m} b_k,\quad m\geq 0.
\]

\noindent We use now Equations~\eqref{eq:expectation} and~\eqref{eq:weight-formula} to show that 

\begin{align*}
b_k&\leq 2\sum_{i=1}^n (Y_i-\bar Y_n)^2+2nM_1^2\left(\max_{1\leq i\leq n} \left| w_{k,i}^\lambda\right|\right)^2\\
&\leq 2n \|\tilde Y\|_\infty^2+2nM_1^2\left(\sum_{j=1}^k\binom{k}{j}\lambda^j\|S\|_\infty^{j-1}\|\tilde Y\|_\infty\right)^2\\
&\leq 2n \|\tilde Y\|_\infty^2+2nM_1^2\|\tilde Y\|_\infty^2\left(\lambda k\sum_{j=1}^k\frac{(\lambda k \|S\|_\infty)^{j-1}}{(j-1)!}\right)^2\\
&\leq 2n\|\tilde Y\|_\infty^2 \left\lbrace 1+ (\lambda k M_1)^2\mathrm{e}^{2\lambda k \|S\|_\infty}\right\rbrace.
\end{align*}
This implies that 
\[
\sum_{k=0}^m b_k\leq 2(m+1)n\|\tilde Y\|_\infty^2\left\lbrace 1+ (\lambda m M_1)^2\mathrm{e}^{2\lambda m \|S\|_\infty}\right\rbrace.
\]
Therefore, as $a_0=0$, it follows that 

\begin{align*}
\max_{1\leq i\leq n+1} \mathrm{Var}_\xi[\hat F_{m}^\lambda(x_i)]&\leq 2(m+1)\beta (1+\alpha)^mn\|\tilde Y\|_\infty^2 \left\lbrace 1+ (\lambda m M_1)^2\mathrm{e}^{2\lambda m \|S\|_\infty}\right\rbrace.
\end{align*}
Since, for $\lambda <1$, $\alpha\leq (2M_1+M_1^2+(n+1)M_2)\lambda$ and $2\beta\leq (2M_1+M_1^2+(n+1)M_2)\lambda^2$, the expected upper bound for the variance of the stochastic boosting output $\hat{F}_m^\lambda(x)$ is obtained.
\end{proof}

\begin{proof}[Proof of Corollary~\ref{cor:vanishing-limit-stochastic}]
First observe that
\[
\lim_{\lambda\to 0} \mathbb{E}_\xi[\hat F_{[t/\lambda]}^\lambda(x)]
= \bar{Y}_n+\sum_{i=1}^n w_{t,i} \bar g_i(x)=\hat F_t(x).
\]
This is a straightforward consequence of Equation~\eqref{eq:expectation} and of the weight convergence $w_{[t/\lambda]}^\lambda \to w_t$ stated in Proposition~\ref{prop:vanishing-learning-rate}. Together with the convergence of the variance $\mathrm{Var}_\xi[\hat F_{[t/\lambda]}^\lambda(x)]\to 0$ deduced from  point $ii)$ of Proposition~\ref{prop:randomized-setting-fdd}, this yields the convergence in quadratic mean $\hat F_{[t/\lambda]}^\lambda(x)\to  \hat F_t(x)$  as $\lambda\to 0$.
\end{proof}

\begin{proof}[Proof of Theorem~\ref{thm:Markov-functional}] \textit{Point $i)$.} The recursive relation \eqref{eq:stochastic-boosting-recursion} can be rewritten as
\[
\hat F_{m+1}^\lambda=T(\hat F_{m}^\lambda,\xi_{m+1}),\quad m\geq 0,
\]
with $T:L^\infty\times \Xi\to L^\infty$ defined by
\[
T(F,\xi)=F(\cdot)+\lambda \sum_{i=1}^n (Y_i-F(x_i))g_i(\cdot,\xi).
\]
Since $(\xi_m)_{m\geq 1}$ is i.i.d. and independent of $(x_i)_{1\leq i\leq n}$, $(Y_i)_{1\leq i\leq n}$ and $\hat F_0^\lambda=\bar Y_n$, this implies that $(\hat F_m^\lambda)_{m\geq 0}$ is a time homogeneous Markov chain. 

\medskip\noindent
 \textit{Point $ii)$.}  Note that the Markov chain $(\hat F_m^\lambda)_{m\geq 0}$ remains in the finite dimensional subspace 
\[
\mathcal{F}=\mathrm{span}(1,g_i(\cdot,\xi); 1\leq i\leq n, \xi\in\Xi)\subset L^\infty
\]
and that the Markov property stated in point $i)$ remains true if we replace $L^\infty$ by the subspace $\mathcal{F}$. We apply Theorem 11.2.3 in \cite{SV06}, page 272, to the Markov chain $(\hat F_m^\lambda)_{m\geq 0}$ on $\mathcal{F}$. Let $f\in\mathcal{F}$ and consider the local drift  and volatility  of the Markov chain at $f$ defined respectively by
\begin{align*}
b_\lambda(f)&=\lambda^{-1}\mathbb{E}_\xi[\hat F_{m}+1^\lambda-\hat F_{m}^\lambda\mid \hat F_{m}^\lambda=f]\\
a_\lambda(f)&=\lambda^{-1}\mathbb{E}_\xi[(\hat F_{m+1}^\lambda-\hat F_{m}^\lambda)(\hat F_{m+1}^\lambda-\hat F_{m}^\lambda)^T\mid \hat F_{m}^\lambda=f]
\end{align*}
where $f$ is  implicitly identified  with its vector of coordinates in some basis so that the product $ff^T$ makes sense. 

Given $\hat F_{m}^\lambda=f$, we have
\[
\hat F_{m+1}^\lambda-\hat F_{m}^\lambda=\lambda\sum_{i=1}^n (Y_i-f(x_i)) g_i(\cdot,\xi_{m+1}).
\]
We deduce that the local drif is given by 
\[
b_\lambda(f) =  \sum_{i=1}^n (Y_i-f(x_i))\bar g_i
\]
and does not depend on $\lambda$, i.e. $b_\lambda(f)=b(f)$. 

To deal with the local volatility $a_\lambda(f)$, note first that there exists a constant $C>0$ such that the matrix $ff^T$ has all its coefficients bounded by $C\|f\|_\infty^2$.
This is a consequence of the equivalence of norms on the finite dimensional space $\mathcal{F}$: the norm $\|\cdot\|_\infty$ is equivalent to the norm of the vector representing $f$ in the basis implicitly used for computing $ff^T$. We can thus bound the coefficients of the local volatility matrix by 
\begin{align*}
&\lambda C\mathbb{E}_\xi \Big\|\sum_{i=1}^n (Y_{i}-f(x_i))g_i(\cdot,\xi_{m+1})\Big\|_\infty^2\\
&\leq \lambda CM^2\Big(\max_{1\leq i\leq n} \left|Y_i-f(x_i)\right|\Big)^2,
\end{align*}
where $M=\max_{\xi\in\Xi}\|\sum_{i=1}^n |g_i(\cdot,\xi)|\|_\infty$ is finite because $\Xi$ is finite. We deduce
\[
a^\lambda(f)\longrightarrow a(f) \equiv 0 \quad \mbox{uniformly on compact sets as $\lambda\to 0$.}
\]
The limit functions $a$ and $b$ are continuous. Since $a\equiv 0$ and $b$ is an affine function, the associated martingale problem has exactly one solution starting from any point, see \cite{SV06} Lemma 6.1.4 page 140 or Theorem 6.3.4 page 152. In fact, because the limit volatility $a\equiv 0$ is vanishing, the solution of the martingale problem is the solution of the  differential equation on $\mathcal{F}$
\[
f'(t)=b(f(t))=\sum_{i=1}^n (Y_i-f(x_i))\bar g_i,\quad t\geq 0.
\]
This is exactly the differential Equation~\eqref{eq:EDO} and we have proved in Theorem~\ref{thm:EDO} that it has a unique solution with initial condition $f(0)=\bar Y_n$. Then, Theorem 11.2.3 in \cite{SV06} implies that the continous processes defined by interpolation
\[
\tilde{F}_{t}^\lambda=(1-\{t/\lambda\})\hat F_{[t/\lambda]}^\lambda+\{t/\lambda\} \hat F_{[t/\lambda]+1}^\lambda,\quad t\geq 0,
\]
converge in  distribution in the space of continuous functions $\mathbb{C}([0,\infty),\mathcal{F})$
\begin{equation}\label{eq:conv-C}
(\tilde{F}_{t}^\lambda )_{t\geq 0} \stackrel{d}\longrightarrow (\hat F_t)_{t\geq 0}.
\end{equation}
The notation $\{u\}=u-[u]$ stands for the fractional part of a real number. Finally, the convergence in distribution~\eqref{eq:conv-functional} in the Skorokhod space $\mathbb{D}([0,\infty),\mathcal{F})$ follows by a standard discretization argument. It holds 
\[
(\hat F^\lambda_{[t/\lambda]})_{t\geq 0}=\Psi_\lambda((\tilde{F}^\lambda_{t})_{t\geq 0})
\]
where  $\Psi_\lambda:\mathbb{C}([0,\infty),\mathcal{F})\longrightarrow \mathbb{D}([0,\infty),\mathcal{F})$ is the discretization functional
\[
\Psi_\lambda((f_t)_{t\geq 0})=(f_{\lambda[t/\lambda]})_{t\geq 0}.
\] 
The functional $\Psi_\lambda$ satisfies the following property: for all converging sequence  $(f_t^\lambda)\to (f_t)$ in $\mathbb{C}([0,\infty),\mathcal{F})$  as $\lambda\to 0$, it holds $\Psi_\lambda((f_t^\lambda))\to (f_t)$ in $\mathbb{D}([0,\infty),\mathcal{F})$ as $\lambda\to 0$. Together with the convergence~\eqref{eq:conv-C}, this implies the convergence~\eqref{eq:conv-functional} by the generalized continuous mapping theorem \citep[Theorem 2.7]{B99}.
\end{proof}

\bibliographystyle{apalike}

\end{document}